\newtheorem{theoreme}{Rule}
\newtheorem{example}{Example}
\newtheorem{proof}{Proof}
\newtheorem{lemma}{Lemma}
\newcommand{\nRCBG}{\begin{tabular}{p{2.8in}}   \end{tabular} }
\newcommand{\Cqfd}{ \nRCBG   \textrm{$\blacksquare$} \ } 
\title{New Inference Rules for Max-SAT}
\author{\name Chu Min Li \email chu-min.li@u-picardie.fr \\
\addr  LaRIA, Universit\'e de Picardie Jules Verne\\
33 Rue St. Leu, 80039 Amiens Cedex 01, France \\
\name Felip Many\`a \email felip@iiia.csic.es \\
\addr  IIIA, Artificial Intelligence Research Institute\\
CSIC, Spanish National Research Council\\
Campus UAB, 08193 Bellaterra, Spain \\
\name Jordi Planes \email jplanes@diei.udl.es \\
\addr Computer Science Department, Universitat de Lleida  \\
Jaume II, 69, 25001 Lleida, Spain  
}
\begin{document}

\maketitle

\begin{abstract}

Exact Max-SAT solvers, compared with SAT solvers, apply little inference at each node of the proof tree.
Commonly used SAT inference rules like unit propagation produce a simplified formula that preserves satisfiability but,
unfortunately, solving the Max-SAT problem for the simplified formula is not equivalent to solving it for the original formula.
In this paper, we define a number of original inference rules 
that, besides being applied efficiently, transform Max-SAT instances into equivalent Max-SAT
instances which are easier to solve.
The soundness of the rules, that can be seen as refinements of unit resolution adapted to Max-SAT, are proved
in a novel and simple way via an integer programming transformation. With the aim of
finding out how powerful the inference rules are in practice, we have developed a new Max-SAT solver, called
MaxSatz, which incorporates those rules, and performed  an experimental investigation. The results
provide empirical evidence that MaxSatz is very competitive, at least,
on random Max-2SAT, random Max-3SAT, Max-Cut, and Graph 3-coloring instances, as well as on the
benchmarks from the Max-SAT Evaluation 2006.
\end{abstract}

\section{Introduction}

In recent years there has been a growing interest in developing fast exact Max-SAT 
solvers~\cite{AGN98,AMP03,AMP05,GLMS03,LMP05,XZ04,ZSM03a} 
due to their potential  to solve over-constrained NP-hard problems encoded in the formalism of Boolean CNF formulas.
Nowadays, Max-SAT solvers are able to solve a lot of instances that are beyond the reach of the solvers developed just five years
ago. Nevertheless, there is yet a considerable gap between the difficulty of the instances solved with current SAT solvers and 
the instances solved with the best performing Max-SAT solvers.

The motivation behind our work is to bridge that gap between complete SAT solvers and exact Max-SAT solvers by investigating
how the technology previously developed for SAT~\cite{GN02,Li99,MarquesSakallah99,Zhang97,ZMMM01}
can be extended and incorporated into Max-SAT. More precisely, we focus the attention on
branch and bound Max-SAT solvers based on the Davis-Putnam-Logemann-Loveland (DPLL) procedure~\cite{DLL62,DavisPutnam60}.

One of the main differences between  SAT solvers and Max-SAT solvers
is that the former make an intensive use of unit propagation at each node of the proof tree.
Unit propagation, which is a highly powerful inference rule, transforms a SAT instance $\phi$ into a 
satisfiability equivalent SAT instance $\phi'$
which is easier to solve. Unfortunately, solving the Max-SAT problem for $\phi$ is,
in general, not {\it equivalent} to solving it for $\phi'$; i.e., the number of unsatisfied clauses
in $\phi$ and $\phi'$ is not the same for every truth assignment.
 For example, if we apply unit propagation to the CNF formula
$\phi=\{x_1, \bar x_1  \vee x_2, \bar x_1 \vee \neg x_2,\bar x_1  \vee x_3, \bar x_1 \vee \neg x_3 \}$,
we obtain $\phi'=\{ \Box,\Box\}$, but $\phi$ and $\phi'$ are not equivalent because any 
interpretation satisfying $\neg x_1$ unsatisfies one clause of $\phi$ and two clauses of $\phi'$. Therefore, 
if we want to compute an optimal solution, we cannot apply unit propagation as in SAT solvers.

We proposed in a previous work~\cite{LMP05} to use unit propagation to compute lower bounds in branch and bound Max-SAT solvers
instead of using unit propagation to simplify CNF formulas.
In our approach, we detect disjoint inconsistent subsets of clauses via unit propagation.
It turns out that the number of disjoint inconsistent subsets detected is an underestimation of the number of clauses
that will become unsatisfied when the current partial assignment is extended to a complete assignment.
That underestimation plus the number of clauses unsatisfied by the current partial assignment provides a good performing
lower bound, which captures the lower bounds based on inconsistency
counts that  most of the state-of-the-art Max-SAT solvers implement~\cite{AMP03a,AMP03,BF99,WF96,ZSM03a}, as well as 
other improved lower bounds~\cite{AMP04,AMP05,XZ04,XZ05}.

On the one hand, the number of disjoint inconsistent subsets detected is just a conservative underestimation for
the lower bound, since every inconsistent subset $\phi$ increases the lower bound by one independently of the number of clauses 
of $\phi$ unsatisfied by an optimal assignment. However, an optimal assignment can violate more than one clause of an inconsistent
subset. Therefore, we should be able to improve the lower bound based on counting the
number of disjoint inconsistent subsets of clauses.

On the other hand, despite the fact that good quality lower bounds prune large parts of the search space and accelerate dramatically the search
for an optimal solution, whenever the lower bound does not reach the best solution found so far (upper bound),
the solver continues exploring the search space below the current node.
During that search, solvers often redetect the same inconsistencies when computing 
the lower bound at different nodes. Basically, the problem with lower bound computation methods is that they do not simplify the CNF
formula in such a way that the unsatisfied clauses become explicit.
Lower bounds are just a pruning technique.

To overcome the above two problems, we define a set of {\em sound} inference rules that transform a 
Max-SAT instance  $\phi$ into a Max-SAT instance $\phi'$ which is easier to solve. In Max-SAT, an inference rule is
sound whenever $\phi$ and $\phi$' are equivalent.

Let us see an example of inference rule: Given a Max-SAT instance~$\phi$ that contains
three clauses of the form $l_1,l_2, \bar l_1 \vee \bar l_2$, where $l_1,l_2$ are literals,
we replace $\phi$ with the CNF formula
$$\phi'=(\phi - \{l_1,l_2, \bar l_1 \vee \bar l_2\}) \cup \{\Box,l_1 \vee l_2\}.$$
Note that the rule detects a contradiction from $l_1,l_2, \bar l_1 \vee \bar l_2$ and, therefore, replaces these
clauses with an empty clause. In addition, the rule adds the clause $l_1 \vee l_2$ to ensure the equivalence between $\phi$ and $\phi'$.
For any assignment containing either $l_1=0,l_2=1$, or $l_1=1,l_2=0$, or $l_1=1,l_2=1$, the number of unsatisfied clauses
in $\{l_1,l_2, \bar l_1 \vee \bar l_2\}$ is~1, but for any assignment containing $l_1=0,l_2=0$, the number of unsatisfied clauses
is~2. Note that even when any assignment containing $l_1=0,l_2=0$ is not the best assignment for the subset 
\{$l_1,l_2, \bar l_1 \vee \bar l_2$\}, it can be the best for the whole formula. By adding $l_1 \vee l_2$, the rule ensures that the number of unsatisfied clauses in $\phi$ and $\phi'$ is also the same
when $l_1=0,l_2=0$.

That inference rule adds the new clause $l_1 \vee l_2$,  which may contribute to another contradiction
detectable via unit propagation.
In this case, the rule allows to increase the lower bound by 2 instead of 1.
Moreover, the rule makes explicit a contradiction among $l_1,l_2, \bar l_1 \vee \bar l_2$, so that the contradiction 
does not need to be redetected below the current node.

Some of the inference rules defined in the paper are already known in the literature~\cite{BR99,NR00}, others are
original for Max-SAT. The new rules were inspired by different unit resolution refinements applied in SAT,
and were selected because they could be applied in a natural and efficient way. In a sense,
we can summarize our work telling that we have defined the Max-SAT counterpart of SAT unit propagation.

With the aim of finding out how powerful the inference rules are in practice, we have designed and implemented a new Max-SAT solver, called
MaxSatz, which incorporates those rules, as well as the lower bound defined in a previous work~\cite{LMP05}, and performed  an experimental investigation. 
The results
provide empirical evidence that MaxSatz is very competitive, at least,
on random Max-2SAT, random Max-3SAT, Max-Cut, and Graph 3-coloring instances, as well as on the
benchmarks from the Max-SAT Evaluation 2006\footnote{http://www.iiia.csic.es/\~{}maxsat06}.

The structure of the paper is as follows.  In Section~\ref{preliminaries}, we give some preliminary definitions. 
In Section~\ref{basic-solver}, we describe a basic branch and bound Max-SAT solver.
In Section~\ref{inference-rules}, we define the inference rules and prove their soundness in a novel and simple way
via an integer programming transformation. We also give examples to illustrate that the inference rules may produce 
better quality lower bounds.
In Section \ref{implementation}, we present the implementation of the inference rules in MaxSatz.
 In Section~\ref{MaxSatz}, we describe the main features of MaxSatz.
In Section~\ref{Experiments}, we report on the experimental investigation.
In Section~\ref{related-work}, we present the related work.
In Section~\ref{conclusions}, we present the conclusions and future work.

\section{Preliminaries}\label{preliminaries}

In propositional logic a variable $x_i$ may take values 0 (for false) or 1 (for true). A literal $l_i$ is a variable
$x_i$ or its negation $\bar x_i$. A clause is  a disjunction of  literals, and a CNF formula $\phi$ is a conjunction of clauses.  
The length of a clause is the number of its literals. The size of
$\phi$, denoted by $|\phi|$, is the sum of the length of all its clauses.

An assignment of truth values to the propositional variables satisfies a literal $x_i$ if $x_i$ takes the value~1
and satisfies a literal $\bar x_i$ if $x_i$ takes the value~0, satisfies a clause if it satisfies at least one
literal of the clause, and satisfies a CNF formula if it satisfies all the clauses of the formula. An empty clause, 
denoted by $\Box$, contains no literals and cannot be satisfied. An assignment for a CNF formula $\phi$ is complete if all the variables
occurring in $\phi$ have been assigned; otherwise, it is partial.

The  Max-SAT problem for a CNF formula $\phi$ is the problem of finding an assignment of values to propositional
variables that minimizes the number of unsatisfied clauses (or equivalently, that maximizes 
the number of satisfied clauses). Max-SAT is called  Max-$k$SAT when all the clauses have $k$ literals per clause. 
In the following, we represent a CNF formula as a multiset of clauses, since duplicated clauses are allowed in a Max-SAT instance.

CNF formulas $\phi_1$ and $\phi_2$ are equivalent if $\phi_1$ and $\phi_2$ have the same number of unsatisfied clauses
for every complete assignment of $\phi_1$ and $\phi_2$.

\section{A Basic Max-SAT Solver}\label{basic-solver}

The space of all possible assignments for a CNF formula $\phi$ can be represented as a search tree,
where internal nodes represent partial assignments and leaf nodes represent complete assignments.
A basic branch and bound algorithm for Max-SAT explores the search tree in a depth-first manner.
At every node, the algorithm compares the number of clauses unsatisfied by the best 
complete assignment found so far ---called upper bound ($UB$)--- with the number of clauses
unsatisfied by the current partial assignment ($\#emptyClauses$) plus an underestimation of the minimum number 
of non-empty clauses that will become unsatisfied if we extend the current partial assignment into a complete assignment ($underestimation$).

The sum $\#emptyClauses + underestimation$  is a lower bound ($LB$) of the minimum number of clauses  unsatisfied by 
any complete assignment extended from the current partial assignment. Obviously,
 if $LB \geq UB$, a better solution cannot be found from this point in search.
In that case, the algorithm prunes the subtree below
the current node and backtracks to a higher level in the search tree. 

If~$LB < UB$, the algorithm tries to find a possible better solution by extending the current partial assignment
by instantiating one more variable; which leads to the creation of  two branches from the current branch: the
left branch corresponds to assigning the new variable to false, and the right branch
corresponds to assigning the new variable to true.
In that case, the formula associated with the left (right) branch is obtained from the formula of
the current node by deleting all the clauses containing the literal $\bar x$ ($x$) and 
removing all the occurrences of the literal $x$ ($\bar x$); i.e., the algorithm applies the 
\emph{\mbox{one-literal} rule}. 

The solution to Max-SAT is the value that $UB$ takes after
exploring the entire search tree.

\begin{figure} 
\begin{algorithmic}[1]
\REQUIRE{\emph{max-sat}($\phi$, $UB$)} : A CNF formula $\phi$ and an upper bound $UB$ 

\STATE $\phi \leftarrow \emph{simplifyFormula}(\phi)$;
\IF{$\phi=\emptyset$ or $\phi$ only contains empty clauses}
\STATE return $\#emptyClauses(\phi)$;
\ENDIF
\STATE $LB \leftarrow \#emptyClauses(\phi)+underestimation(\phi, UB)$;
\IF{ $LB \ge UB$}
\STATE return $UB$;
\ENDIF
\STATE $x \leftarrow \emph{selectVariable}(\phi)$;
\STATE $UB \leftarrow \min( UB, \emph{max-sat}(\phi_{\bar x}, UB))$;
\STATE return $\min(UB, \emph{max-sat}(\phi_x, UB))$;

\ENSURE The minimal number of unsatisfied clauses of $\phi$  
\end{algorithmic}
\caption{A basic branch and bound algorithm for Max-SAT}
\label{algoBnB}
\end{figure}

Figure~\ref{algoBnB} shows the pseudo-code of a basic solver for Max-SAT.
We use the following notations:

\begin{itemize}

\item $\emph{simplifyFormula}(\phi)$ is a procedure that simplifies $\phi$ by applying 
sound inference rules.

\item $\#emptyClauses(\phi)$ is a function that returns the
number of empty clauses in~$\phi$.

\item $LB$ is a lower bound of the minimum number of unsatisfied clauses
in $\phi$ if the current partial assignment is extended to a complete assignment.
 We assume that its initial value is~0.

\item $\emph{underestimation}(\phi,UB)$ is a function that returns an underestimation
of the minimum number of non-empty clauses in $\phi$ that will become unsatisfied if the current partial
assignment is extended to a complete assignment.

\item $UB$ is an upper bound of the number of unsatisfied clauses in an optimal
solution. We assume that its initial value is the total number of clauses in the input formula.
 
\item $\emph{selectVariable}(\phi)$ is a function that returns a variable of $\phi$
following an heuristic.

\item $\phi_x$ ($\phi_{\bar x}$) is the formula obtained by applying the one-literal rule
to $\phi$ using the literal $x$ ($\bar x$).
\end{itemize}

State-of-the-art Max-SAT solvers implement the basic algorithm augmented with 
powerful inference techniques, good quality lower bounds, clever variable selection
heuristics, and efficient data structures.

We have recently defined~\cite{LMP05} a lower bound computation method in which the underestimation
of the lower bound is the number of disjoint inconsistent subsets that can be detected using unit propagation. The
pseudo-code is shown in Figure~\ref{estimate}.

\begin{figure} 
\begin{algorithmic}[1]
\REQUIRE{\emph{underestimation}($\phi$, $UB$)}  : A CNF formula $\phi$ and an upper bound $UB$ 

\STATE  $underestimation \leftarrow 0$;

\STATE apply the one-literal rule to the unit clauses of $\phi$ (unit propagation) until an empty clause is derived;

\IF{no empty clause can be derived}
\STATE return $underestimation$;
\ENDIF
\STATE $\phi \leftarrow \phi$ without the clauses that have been used to derive the empty clause;

\STATE  $underestimation:= underestimation +1$;
\IF{$underestimation$+\#$emptyClauses$($\phi$) $\ge$ $UB$}
\STATE return $underestimation$;
\ENDIF
\STATE go to 2;

\ENSURE the underestimation of the lower bound for $\phi$
\end{algorithmic}
\caption{Computation of the underestimation using unit propagation}\label{estimate}
\end{figure}

\begin{example}\label{formula-lb}
Let $\phi$ be the following CNF formula:
$$\{x_1, x_2, x_3, x_4, \bar x_1 \vee \bar x_2 \vee \bar x_3, \bar x_4,
x_5, \bar x_5 \vee \bar x_2,  \bar x_5 \vee x_2\}.$$

With our approach we are able to establish that the number of disjoint inconsistent subsets of clauses 
in $\phi$ is at least~3. Therefore, the underestimation of the lower bound is~3.
The steps performed are the following ones:

\begin{enumerate}

\item  $\phi= \{x_4,  \bar x_4, x_5, \bar x_5 \vee \bar x_2,  \bar x_5 \vee x_2\}$,
the first inconsistent subset detected using unit propagation is $\{x_1, x_2, x_3,\bar x_1 \vee \bar x_2 \vee \bar x_3\}$, and
$underestimation=1$.

\item $\phi= \{x_5, \bar x_5 \vee \bar x_2,  \bar x_5 \vee x_2\}$, the second
inconsistent subset detected using unit propagation is  $\{x_4,\bar x_4\}$, and $underestimation=2$.

\item $\phi= \emptyset$, the third inconsistent subset detected using unit propagation is
$\{x_5, \bar x_5 \vee \bar x_2, \bar x_5 \vee x_2\}$, and $underestimation=3$. Since $\phi$ is empty,
the algorithm stops.
\end{enumerate}

\end{example}

\section{Inference Rules}\label{inference-rules}

We define the set of inference rules considered in the paper.
They were inspired by different unit resolution refinements applied in SAT,
and were selected because they could be applied in a natural and efficient way. 
Some of them are already known in the literature~\cite{BR99,NR00}, others are
original for Max-SAT.

Before presenting the rules, we define an integer programming transformation of a CNF formula used to establish the
soundness of the rules.  The method of proving soundness is novel in Max-SAT, and provides clear
and short proofs.

\subsection{Integer Programming Transformation of a CNF Formula}

Assume that $\phi=\{c_1, ..., c_m\}$ is a CNF formula  with $m$ clauses over the 
variables $x_1, ..., x_n$. Let $c_i$ $(1 \leq i \leq m)$ be
$x_{i_1} \vee ... \vee x_{i_k} \vee \bar{x}_{i_{k+1}} \vee ... \vee \bar{x}_{i_{k+r}}$. Note that
we put all positive literals in $c_i$ before the negative ones.

We consider all the variables in $c_i$ as integer variables taking values~0 or~1, and
define the integer transformation of $c_i$ as
$${\cal E}_i(x_{i_1},...,x_{i_k},x_{i_{k+1}},...,x_{i_{k+r}})
=(1-x_{i_1})...(1-x_{i_k})x_{i_{k+1}}...x_{i_{k+r}}$$

Obviously, ${\cal E}_i$ has value 0 iff at least one of the variables $x_{i_j}$'s ($1 \leq j \leq k$) is instantiated to~1 or
at least one of the variables $x_{i_s}$'s ($k+1 \leq s \leq k+r$) is instantiated to~0. In other words, ${\cal E}_i$=0
iff $c_i$ is satisfied. Otherwise ${\cal E}_i$=1.

A literal $l$  corresponds to an integer denoted by $l$ itself for our 
convenience. The intention of the correspondence is that the literal $l$ is satisfied if the integer 
$l$ is 1 and is unsatisfied if the integer $l$ is 0.
So if $l$ is a positive literal $x$, the corresponding integer $l$ is $x$, $\bar l$ is 1-$x$=1-$l$, 
and if $l$ is a negative literal $\bar x$, $l$ is 1-$x$ and $\bar l$ is $x$=1-(1-$x$)=1-$l$. 
Consequently, $\bar l$=1-$l$ in any case. 

We now generically write $c_i$
as $l_1 \vee l_2 \vee ... \vee l_{k+r}$. Its integer programming transformation is
$${\cal E}_i=(1-l_1)(1-l_2)...(1-l_{k+r}).$$

The integer programming transformation of a CNF formula  $\phi=\{c_1, ..., c_m\}$ over the
variables $x_1, ..., x_n$ is defined as
\begin{equation} \label{Energy}
{\cal E}(x_1, ..., x_n)=\sum_{i=1}^m {\cal E}_i
\end{equation}

That integer programming transformation was used~\cite{HC97,LH05} to design a local search procedure, and is called pseudo-Boolean formulation by~\citeauthor{BH02}~\citeyear{BH02}. Here, we extend it to empty clauses:
if $c_i$ is empty, then ${\cal E}_i$=1. 

Given an assignment $A$ over the variables $x_1, ..., x_n$, the value of ${\cal E}$ is 
the number of unsatisfied clauses in 
$\phi$. If $A$ satisfies all clauses in $\phi$, then ${\cal E}=0$. 
Obviously, the minimum number of unsatisfied clauses of $\phi$ 
is the minimum value of ${\cal E}$. 

Let ${\phi}_1$ and ${\phi}_2$ be two CNF formulas, and let ${\cal E}_1$ and ${\cal E}_2$ be their
integer programming transformations. It is clear that ${\phi}_1$ and ${\phi}_2$ are equivalent if, and 
only if, ${\cal E}_1$=${\cal E}_2$ for every complete assignment for ${\phi}_1$ and ${\phi}_2$.

\subsection{Inference Rules}

We next define the inference rules and prove their soundness using the previous integer programming transformation.
In the rest of the section, ${\phi}_1$, ${\phi}_2$ and ${\phi}'$ denote CNF formulas, and
 ${\cal E}_1$, ${\cal E}_2$, and ${\cal E}'$ their integer programming transformations.
To prove that $\phi_1$ and $\phi_2$ are equivalent, we prove 
that ${\cal E}_1={\cal E}_2$.

\begin{theoreme}  \label{resolution} \cite{BR99}
 If ${\phi}_1$=\{$l_1 \vee l_2 \vee ... \vee l_k,~{\bar l}_1 \vee l_2  \vee ... \vee l_k\} \cup {\phi}'$,
 then ${\phi}_2$=\{$l_2  \vee ... \vee l_k\} \cup {\phi}'$ is equivalent to ${\phi}_1$.
\end{theoreme}
\begin{proof}
 \begin{eqnarray} 
{\cal E}_1 & = & (1-l_1)(1-l_2)...(1-l_k)+l_1(1-l_2)...(1-l_k)+{\cal E}' \nonumber\\ 
           & = & (1-l_2)...(1-l_k)+{\cal E}' \nonumber\\ 
           & = & {\cal E}_2 \nonumber ~~~~\Cqfd
\end{eqnarray}
\end{proof}
General case resolution does not work in  Max-SAT~\cite{BR99}. Rule \ref{resolution} establishes that  resolution works 
when two clauses give a strictly shorter resolvent.

 Rule \ref{resolution}  is known in the literature as {\em replacement of almost common
clauses}. We pay special attention to the case k=2, where the resolvent is a unit clause, and to the case k=1, 
where the resolvent is the empty clause. We  describe this latter case in the following rule:

\begin{theoreme}  \label{linear1} \cite{NR00}
If ${\phi}_1$=\{$l,~\bar l\} \cup {\phi}'$, then ${\phi}_2$=\{$\Box\} \cup \phi'$ is equivalent to ${\phi}_1$.
\end{theoreme}
\begin{proof}
${\cal E}_1$=1-$l$+ $l$+${\cal E}'$=1+ ${\cal E}'$=$ {\cal E}_2$ ~~~~\Cqfd
\end{proof}
Rule \ref{linear1}, which is known as {\em complementary unit clause rule}, can be used to replace two
complementary unit clauses with an empty clause.  The new empty clause contributes to the lower bounds of the
search space below the current node  by incrementing the number of unsatisfied clauses, but not by incrementing the underestimation, which means that this contradiction does not have to be redetected again. In practice, that simple rule gives rise to
considerable gains.

The following rule is a more complicated case:

\begin{theoreme}  \label{linear2} 
If ${\phi}_1$=\{$l_1, ~\bar{l}_1 \vee \bar{l}_2, ~l_2\} \cup \phi'$, then ${\phi}_2$=\{$\Box,~ l_1 \vee l_2\} \cup \phi'$ is equivalent to ${\phi}_1$.
\end{theoreme}
\begin{proof}
\begin{eqnarray} 
{\cal E}_1 & = & 1-l_1+l_1l_2+1-l_2+{\cal E}' \nonumber\\ 
           & = & 1+1-l_1+l_2(l_1-1)+{\cal E}' \nonumber\\ 
           & = & 1+1-l_1-l_2(1-l_1)+{\cal E}' \nonumber\\ 
           & = & 1+(1-l_1)(1-l_2)+{\cal E}' \nonumber\\ 
           & = & {\cal E}_2 \nonumber ~~~~\Cqfd
\end{eqnarray}
\end{proof}

Rule \ref{linear2} replaces three clauses with an empty clause, and adds
a new binary clause to keep the equivalence between $\phi_1$ and $\phi_2$.

Pattern $\phi_1$ was considered to compute underestimations 
by~\citeauthor{AMP04}~\citeyear{AMP04} and \citeauthor{SZ04}~\citeyear{SZ04}; 
and is also captured by our method of computing underestimations based on unit propagation~\cite{LMP05}. 
\citeauthor{LH05a} mentioned \citeyear{LH05a} that existential directional arc consistency \cite{GZHL05} 
can capture this rule.
Note that underestimation computation methods by~\citeauthor{AMP04} and \citeauthor{SZ04} do not add any additional clause as in our approach, 
they just detect contradictions. 

Let us define a rule that generalizes Rule \ref{linear1} and Rule \ref{linear2}. Before presenting the rule, we
define a lemma needed to prove its soundness.

\begin{lemma} \label{lemma1} 
If ${\phi}_1$=\{$l_1,~ \bar{l}_1 \vee l_2\} \cup \phi'$ and ${\phi}_2$=\{$l_2, ~\bar{l}_2 \vee l_1\} \cup \phi'$, 
then ${\phi}_1$ and ${\phi}_2$ are 
equivalent.
\end{lemma}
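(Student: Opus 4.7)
The plan is to follow the same integer-programming strategy that was used for Rules~\ref{resolution}--\ref{linear2}: translate each side to its polynomial form $\mathcal{E}_1$ and $\mathcal{E}_2$, algebraically simplify each, and observe that both reduce to the same expression. Since equivalence of CNF formulas was characterized at the end of Section 4.1 by equality of their integer programming transformations for every complete assignment, showing $\mathcal{E}_1 = \mathcal{E}_2$ as polynomials in the integer variables suffices.

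First I would recall that for a literal $l$ the integer representation satisfies $\bar{l} = 1 - l$, so the clause $\bar{l}_1 \vee l_2$ transforms to $(1 - \bar{l}_1)(1 - l_2) = l_1(1 - l_2)$. Thus
\begin{equation*}
\mathcal{E}_1 = (1 - l_1) + l_1(1 - l_2) + \mathcal{E}' = 1 - l_1 l_2 + \mathcal{E}'.
\end{equation*}
Symmetrically, the clause $\bar{l}_2 \vee l_1$ transforms to $l_2(1 - l_1)$, and hence
\begin{equation*}
\mathcal{E}_2 = (1 - l_2) + l_2(1 - l_1) + \mathcal{E}' = 1 - l_1 l_2 + \mathcal{E}'.
\end{equation*}
So $\mathcal{E}_1 = \mathcal{E}_2$ as polynomials, which gives the equivalence of $\phi_1$ and $\phi_2$.

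There is essentially no obstacle here: the computation is a two-line expansion in each case, and the symmetry of the two formulas under the swap $l_1 \leftrightarrow l_2$ guarantees that the simplified forms must coincide. The only thing worth being careful about is handling $\bar{l}_i = 1 - l_i$ correctly when $l_i$ is itself a negative literal, but this is exactly the ``$\bar{l} = 1 - l$ in any case'' remark established in Section 4.1, so it poses no difficulty.
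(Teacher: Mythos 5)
Your proof is correct and follows essentially the same integer-programming computation as the paper: the paper transforms $\mathcal{E}_1$ step by step into $\mathcal{E}_2$ through the intermediate form $1-l_1l_2+\mathcal{E}'$, which is exactly the common expression you reduce both sides to. No issues.
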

\begin{proof}
\begin{eqnarray} 
 {\cal E}_1 & = & 1-l_1+l_1(1-l_2)+{\cal E}' \nonumber\\
            & = & 1-l_1+l_1-l_1l_2+{\cal E}' \nonumber\\
            & = & 1-l_2+l_2-l_1l_2+{\cal E}' \nonumber\\
            & = & 1-l_2+(1-l_1)l_2+{\cal E}' \nonumber\\
            & = & {\cal E}_2 \nonumber ~~~~\Cqfd
\end{eqnarray}
\end{proof}

\begin{theoreme} \label{linear3}
If ${\phi}_1$=\{$l_1, ~{\bar l}_1 \vee l_2, ~{\bar l}_2 \vee l_3,~ ...,~ {\bar l}_{k} \vee l_{k+1}, ~{\bar l}_{k+1}\} \cup \phi'$,
then ${\phi}_2$=\{$\Box, ~l_1 \vee {\bar l}_2, ~l_2 \vee {\bar l}_3, ~..., ~l_{k} \vee {\bar l}_{k+1}\} \cup \phi'$ is equivalent to
${\phi}_1$.
\end{theoreme}

\begin{proof}
We prove the soundness of the rule by induction on $k$. 
When $k$=1, ${\phi}_1=\{l_1,~ {\bar l}_1 \vee l_2,~ {\bar l}_{2}\} \cup \phi'$. By applying Rule~\ref{linear2}, we get
$\{\Box,~ l_1 \vee {\bar l}_{2}\} \cup \phi'$, which is ${\phi}_2$ when $k=1$. Therefore, ${\phi}_1$ and ${\phi}_2$
are equivalent.

Assume that Rule \ref{linear3} is sound for $k=n$. Let us prove that it is sound for $k=n+1$. In that case:
$${\phi}_1=\{l_1,~ {\bar l}_1 \vee l_2, ~{\bar l}_2 \vee l_3, ~...,~ {\bar l}_{n} \vee l_{n+1}, ~{\bar l}_{n+1} \vee l_{n+2}, ~{\bar l}_{n+2}\} \cup \phi'.$$

\noindent  By applying Lemma \ref{lemma1} to the last two clauses of ${\phi}_1$ (before $\phi'$), we get 
$$\{l_1, ~{\bar l}_1 \vee l_2, ~{\bar l}_2 \vee l_3,~ ...,~ {\bar l}_{n} \vee l_{n+1},~ {\bar l}_{n+1}, ~l_{n+1} \vee {\bar l}_{n+2}\} \cup \phi'.$$

\noindent  By applying the induction hypothesis to the first $n+1$ clauses of the previous CNF formula, we get
$$\{\Box, ~l_1 \vee {\bar l}_2, ~l_2 \vee {\bar l}_3, ~..., ~l_{n} \vee {\bar l}_{n+1}, ~l_{n+1} \vee {\bar l}_{n+2} \} \cup \phi',$$

\noindent   which  is ${\phi}_2$ when $k=n+1$. Therefore, ${\phi}_1$ and ${\phi}_2$
are equivalent and the rule is sound. ~~~~\Cqfd
\end{proof}

Rule~\ref{linear3}  is an original inference rule. It captures linear unit resolution refutations in which clauses
and resolvents are used exactly once.
The rule simply adds an empty clause, eliminates two unit clauses and the binary clauses used in the refutation, and 
adds new binary clauses that are obtained by negating the literals of the eliminated binary clauses. So, all the operations involved can be performed efficiently.

 Rule \ref{linear2} and Rule \ref{linear3} make explicit a contradiction, which does not need to be redetected in the current subtree. So,
the lower bound computation becomes more incremental. Moreover, the binary clauses added by Rule~\ref{linear2} and Rule~\ref{linear3}
may contribute to compute better quality lower bounds either by acting as premises of an inference rule or by being part of
an inconsistent subset of clauses, as is illustrated in the following example.

\begin{example}
Let $\phi$=\{$x_1,  \bar x_1 \vee \bar x_2, x_3,  \bar x_3 \vee x_2, x_4, \bar x_1 \vee \bar x_4,  \bar x_3 \vee \bar x_4$\}. 
Depending on the ordering in which unit clauses are propagated, unit propagation detects one of the following three inconsistent subsets 
of clauses:  \{$x_1, \bar x_1 \vee \bar x_2, x_3, \bar x_3 \vee x_2$\},
\{$x_1, x_4, \bar x_1 \vee \bar x_4$\}, or  \{$x_3, x_4, \bar x_3 \vee \bar x_4$\}. Once an inconsistent subset is detected and removed, 
the remaining set of clauses is satisfiable. Without applying Rule \ref{linear2} and Rule \ref{linear3}, the 
lower bound computed is~1, because the underestimation computed using unit propagation is~1.
 
Note that Rule~\ref{linear3} can be applied to the first inconsistent subset \{$x_1, \bar x_1 \vee \bar x_2, x_3, \bar x_3 \vee x_2$\}.
If Rule~\ref{linear3} is applied, a contradiction is made explicit and the clauses $x_1 \vee x_2$ and $x_3 \vee \bar x_2$ are added.
So, $\phi$ becomes $\{\Box, x_1 \vee x_2, x_3 \vee \bar x_2, x_4, \bar x_1 \vee \bar x_4,  \bar x_3 \vee \bar x_4\}$. It turns out that
$\phi - \{\Box\}$ is an inconsistent set of clauses detectable by unit propagation. Therefore, the lower bound computed is~2.

If the inconsistent subset \{$x_1, x_4, \bar x_1 \vee \bar x_4$\} is detected, Rule \ref{linear2} can be applied.
Then,  a contradiction is made explicit and the clause  $x_1 \vee x_4$ is added.
So, $\phi$ becomes $\{\Box, x_1 \vee x_4, \bar x_1 \vee \bar x_2, x_3,  \bar x_3 \vee x_2,  \bar x_3 \vee \bar x_4\}$.
It turns out that $\phi - \{\Box\}$ is an inconsistent set of clauses detectable by unit propagation. 
Therefore, the lower bound computed is~2.

Similarly, if the inconsistent subset \{$x_3, x_4, \bar x_3 \vee \bar x_4$\} is detected and Rule \ref{linear2} 
is applied,  the lower bound computed is~2.

We observe that, in this example, Rule \ref{linear2} and Rule \ref{linear3}  not only make explicit a contradiction,
but also allow to improve the lower bound.
\end{example}

Unit propagation needs at least one unit clause to detect a contradiction. A drawback of Rule \ref{linear2} and Rule \ref{linear3} is that they consume two unit clauses for deriving just one contradiction. A possible situation 
is that, after branching, those two unit clauses could allow unit propagation to derive two disjoint inconsistent subsets of clauses, as we show in the following example.

\begin{example}
Let $\phi$=\{$x_1, \bar x_1 \vee x_2, \bar x_1 \vee x_3, \bar x_2 \vee \bar x_3 \vee x_4, x_5, \bar x_5 \vee x_6, \bar x_5 \vee x_7, \bar x_6 \vee \bar x_7 \vee x_4, \bar x_1 \vee \bar x_5$\}. Rule \ref{linear2} replaces $x_1$, $x_5$, and $\bar x_1 \vee \bar x_5$ with an empty clause and $x_1 \vee x_5$. After that, if $x_4$ is selected as the next branching variable and is assigned 0, 
there is no unit clause in $\phi$ and no contradiction can be detected via unit propagation. The lower bound is 1 in this
situation. However,
if Rule \ref{linear2} was not applied before branching, $\phi$ has two unit clauses after branching.
In this case, the propagation of $x_1$ allows to detect the inconsistent subset 
\{$x_1, \bar x_1 \vee x_2, \bar x_1 \vee x_3, \bar x_2 \vee \bar x_3$\}, and the propagation of $x_5$ allows to detect the
inconsistent subset  \{$x_5, \bar x_5 \vee x_6, \bar x_5 \vee x_7, \bar x_6 \vee \bar x_7$\}. So, the lower bound 
computed after branching is 2.
\end{example}

On the one hand, Rule \ref{linear2} and Rule \ref{linear3}  add clauses that can contribute  to detect additional conflicts.
On the other hand, each application of Rule \ref{linear2} and Rule \ref{linear3} consumes two unit clauses, 
which cannot be used again to detect further conflicts. 
The final effect of these two factors will be empirically analyzed in Section \ref{Experiments}.

Finally, we present two new rules that capture unit resolution refutations in which (i)~exactly one unit clause
is consumed, and (ii)~the unit clause is used twice in the linear derivation of the empty clause.

\begin{theoreme} \label{nonlinear1}
If ${\phi}_1$=\{$l_1, ~{\bar l}_1 \vee l_2,~ {\bar l}_1 \vee l_3, ~{\bar l}_2 \vee {\bar l}_3\} \cup {\phi}'$,
then ${\phi}_2$=\{$\Box, ~l_1 \vee {\bar l}_2 \vee {\bar l}_3, ~{\bar l}_1 \vee l_2 \vee l_3\} \cup  {\phi}'$ is equivalent to ${\phi}_1$.
\end{theoreme}

\begin{proof}
\begin{eqnarray} 
{\cal E}_1 & = & 1-l_1+l_1(1-l_2)+l_1(1-l_3)+l_2l_3+{\cal E}' \nonumber\\
           & = & 1-l_1+l_1-l_1l_2+l_1-l_1l_3+l_2l_3+{\cal E}' \nonumber\\
           & = & 1+l_2l_3-l_1l_2l_3+l_1-l_1l_2-l_1l_3+l_1l_2l_3+{\cal E}' \nonumber\\
           & = & 1+(1-l_1)l_2l_3+l_1(1-l_2-l_3+l_2l_3)+{\cal E}' \nonumber\\
           & = & 1+(1-l_1)l_2l_3+l_1(1-l_2)(1-l_3)+{\cal E}' \nonumber\\
           & = & {\cal E}_2 \nonumber ~~~~\Cqfd
\end{eqnarray}
\end{proof}

We can combine a linear derivation with Rule \ref{nonlinear1} to obtain Rule \ref{nonlinear2}:

\begin{theoreme} \label{nonlinear2}
If ${\phi}_1$=\{$l_1,~ {\bar l}_1 \vee l_2,~ {\bar l}_2 \vee l_3
,~ ..., ~{\bar l}_{k} \vee l_{k+1}
, ~{\bar l}_{k+1} \vee l_{k+2}, ~{\bar l}_{k+1} \vee l_{k+3}, ~{\bar l}_{k+2} \vee {\bar l}_{k+3}\} \cup {\phi}'$,
then ${\phi}_2$=\{$\Box, ~l_1 \vee {\bar l}_2, ~l_2 \vee {\bar l}_3,~ ..., ~l_{k} \vee 
{\bar l}_{k+1},~ l_{k+1} \vee {\bar l}_{k+2} \vee {\bar l}_{k+3}, ~{\bar l}_{k+1} \vee 
l_{k+2} \vee l_{k+3}\} \cup {\phi}'$ is equivalent to ${\phi}_1$.
\end{theoreme}

\begin{proof}
We prove the soundness of the rule by induction on $k$. When $k$=1,  
$${\phi}_1=\{l_1,~ {\bar l}_1 \vee l_2, ~{\bar l}_{2} \vee l_{3}, ~{\bar l}_{2} \vee l_{4}, ~{\bar l}_{3} \vee {\bar l}_{4}\} \cup {\phi}'.$$ 
\noindent By Lemma 1, we get
$$\{l_1 \vee {\bar l}_2, ~l_2 ,~ {\bar l}_2 \vee l_3,~ {\bar l}_2 \vee l_4, ~{\bar l}_3 \vee {\bar l}_4 \} \cup {\phi}'.$$
 
\noindent By Rule \ref{nonlinear1}, we get 
 $$ \{l_1 \vee {\bar l}_2, ~\Box,~ l_2 \vee {\bar l}_3 \vee {\bar l}_4, ~{\bar l}_2 \vee l_3 \vee l_4\} \cup {\phi}',$$
 
\noindent which  is ${\phi}_2$ when $k=1$. Therefore, ${\phi}_1$ and ${\phi}_2$ are equivalent.

Assume that Rule~\ref{nonlinear2} is sound for $k=n$. Let us prove that it is sound for $k=n+1$. In that case:
$$
{\phi}_1 = \{l_1, ~{\bar l}_1 \vee l_2, ~{\bar l}_2 \vee l_3,~ ..., ~{\bar l}_{n+1} \vee l_{n+2},~ {\bar l}_{n+2} \vee l_{n+3}, ~{\bar l}_{n+2} \vee l_{n+4}, ~{\bar l}_{n+3} \vee {\bar l}_{n+4}\} \cup {\phi}'.$$

\noindent By Lemma~\ref{lemma1}, we get
$$\{l_1 \vee {\bar l}_2, ~l_2, ~{\bar l}_2 \vee l_3,~ ...,~ {\bar l}_{n+1} \vee l_{n+2},~ {\bar l}_{n+2} \vee l_{n+3}, ~{\bar l}_{n+2} \vee l_{n+4}, ~{\bar l}_{n+3} \vee {\bar l}_{n+4}\} \cup {\phi}'.$$

\noindent By applying the induction hypothesis, we get
 $$\{l_1 \vee {\bar l}_2, ~\Box,~ l_2 \vee {\bar l}_3,~ ..., ~l_{n+1} \vee {\bar l}_{n+2},~ l_{n+2} \vee {\bar l}_{n+3} \vee {\bar l}_{n+4}, ~{\bar l}_{n+2} \vee l_{n+3} \vee l_{n+4}\} \cup {\phi}',$$

\noindent which  is ${\phi}_2$ when $k=n+1$. Therefore, ${\phi}_1$ and ${\phi}_2$
are equivalent and the rule is sound. ~~~~\Cqfd

\end{proof}

Similarly to Rule \ref{linear2} and Rule \ref{linear3}, Rule \ref{nonlinear1} and Rule \ref{nonlinear2} make explicit a contradiction,
which does not need to be redetected in subsequent search. Therefore, the lower bound computation becomes more incremental.
Moreover, they also add clauses which can improve the quality of the lower bound, as illustrated in the following example.

\begin{example}
Let $\phi$=\{$x_1, \bar x_1 \vee x_2, \bar x_1 \vee x_3, \bar x_2 \vee \bar x_3, x_4, x_1 \vee \bar x_4, \bar x_2 \vee \bar x_4, \bar x_3 \vee \bar x_4$\}. Depending on the ordering  in which unit clauses are propagated, unit propagation can detect one of the following inconsistent subsets: \{$x_1, \bar x_1 \vee x_2, \bar x_1 \vee x_3, \bar x_2 \vee \bar x_3$\}, \{$x_4, x_1 \vee \bar x_4, \bar x_2 \vee \bar x_4, \bar x_1 \vee x_2$\},
\{$x_4, x_1 \vee \bar x_4, \bar x_3 \vee \bar x_4, \bar x_1 \vee x_3$\}, in which Rule \ref{nonlinear1} is applicable. If Rule \ref{nonlinear1} is not applied, the lower bound computed using the $underestimation$ function of Figure \ref{estimate} is 1, since 
the remaining clauses of $\phi$ are satisfiable once the inconsistent subset of clauses is removed. Rule \ref{nonlinear1} allows to add two ternary clauses contributing to another contradiction. For example, Rule~\ref{nonlinear1} applied to \{$x_1, \bar x_1 \vee x_2, \bar x_1 \vee x_3, \bar x_2 \vee \bar x_3$\} adds to $\phi$ clauses $x_1 \vee \bar x_2 \vee \bar x_3$ and $\bar x_1 \vee  x_2 \vee  x_3$, which, with the remaining clauses of $\phi$ (\{$x_4, x_1 \vee \bar x_4, \bar x_2 \vee \bar x_4, \bar x_3 \vee \bar x_4$\}), give the second contradiction detectable via unit propagation. So the lower bound computed using Rule~\ref{nonlinear1} is~2.

\end{example}

In contrast to Rule \ref{linear2} and Rule \ref{linear3},  Rule \ref{nonlinear1} and Rule \ref{nonlinear2}  consume exactly one unit 
clause for deriving an empty clause. Since a unit clause can be used at most once to derive a conflict via unit propagation, 
Rule \ref{nonlinear1} and Rule \ref{nonlinear2} do not limit the detection of further conflicts via unit propagation.

\section{Implementation of Inference Rules}
\label{implementation}
In this section, we describe the implementation of all the inference rules presented in Section~\ref{inference-rules}.
We suppose that the CNF formula is loaded and, for every literal $\ell$, a list of clauses containing $\ell$ is constructed. 
The application of a rule means that some clauses in $\phi_1$ are removed from the CNF formula, new clauses in $\phi_2$ are inserted 
into the formula, and the lower bound is increased by 1. Note that in all the inference rules selected in our approach, $\phi_2$ 
contains fewer literals and fewer clauses than $\phi_1$, so that new clauses of $\phi_2$ can be inserted in the place of the removed clauses of $\phi_1$ when an inference rule is applied. Therefore, we do not need dynamic memory management and the implementation can be faster.

Rule \ref{resolution} for $k$=2 and Rule \ref{linear1} can be applied using a matching algorithm \cite<see, e.g.,>[for an efficient implementation]{CLRS01} over the lists of clauses. The first has a time complexity of $O(m)$, where $m$ is the number of clauses in the CNF formula. The second has a time complexity of $O(u)$, where $u$ is the number of unit clauses in the CNF formula. These rules are applied at every node, before any lower bound computation or application of other inference rules. Rule \ref{resolution} ($k$=2) is applied
as many times as possible to derive unit clauses before applying Rule \ref{linear1}.

The implementation of Rule \ref{linear2}, Rule \ref{linear3}, Rule \ref{nonlinear1}, and Rule \ref{nonlinear2} is entirely based on unit propagation. Given a CNF formula $\phi$, unit propagation constructs an implication graph $G$ \cite<see, e.g.,>{bks03}, from which the applicability of inference rules is detected. In this section, we first describe the construction of the implication graph, and then describe how to determine the applicability of Rule~\ref{linear2}, Rule~\ref{linear3}, Rule~\ref{nonlinear1}, and Rule~\ref{nonlinear2}. Then, we analyze the complexity, termination and (in)completeness of the application of 
the rules. Finally we discuss the extension of the inference rules to weighted Max-SAT and their implementation.

\subsection{Implication Graph}

Given a CNF formula $\phi$, Figure \ref{UP} shows how unit propagation constructs an implication graph whose nodes are literals.

\begin{figure} 
\begin{algorithmic}
\REQUIRE{$UnitPropagation(\phi)$ :  $\phi$ is a CNF formula not containing the complementary unit clauses $\ell$ and $\bar \ell$ for any literal $\ell$}
\STATE initialize $G$ as the empty graph
\STATE add a node labeled with $\ell$  for every literal $\ell$ in a unit clause $c$ of $\phi$
\REPEAT
\IF{$\ell_1$, $\ell_2$, ..., $\ell_{k-1}$ are nodes of $G$, $c$ = $\bar \ell_1 \vee \bar \ell_2  \vee ... \vee \bar \ell_{k-1} \vee \ell_k$ is a clause of $\phi$,  and $\ell_k$ is not a node of $G$,}
\STATE add into $G$ a node labeled with $\ell_k$
\STATE add into $G$ a directed edge from node $\ell_i$ to $\ell_{k}$ for every $i$ ($1\le i < k$)
\ENDIF
\UNTIL{no more nodes can be added or there is a literal $\ell$ such that both $\ell$ and $\bar \ell$ are nodes of $G$}
\STATE \textbf{Return} $G$
\ENSURE{Implication graph $G$ of $\phi$}
\end{algorithmic}
\caption{Unit propagation for constructing implication graphs} \label{UP}
\end{figure}

Note that every node in $G$ corresponds to a different literal, where $\ell$ and $\bar \ell$ are considered as different literals. When the CNF formula contains several copies of a unit clause $\ell$, the algorithm adds just one node with label $\ell$.

\begin{example} \label{ImplicationGraph} 
Let $\phi$=\{$x_1, x_1, \bar x_1 \vee x_2, \bar x_1 \vee x_3, \bar x_2 \vee \bar x_3 \vee x_4,
x_5, \bar x_5 \vee x_6, \bar x_5 \vee x_7, \bar x_6 \vee \bar x_7 \vee \bar x_4, \bar x_5 \vee x_8 $\}. $UnitPropagation$
constructs the implication graph of Figure~\ref{UP-example}, in which we add a special node
$\Box$ to highlight the contradiction.

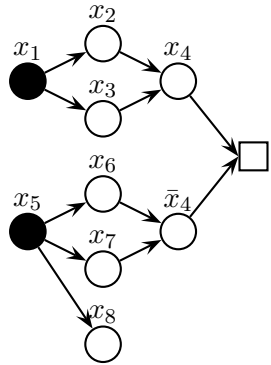
\begin{figure} 
\begin{center}
\begin{pspicture}(0,0)(2,4)
\psset{arrowscale=1.7}
\Cnode*(0,3.5){X1}
\Cnode(1,4){X2}
\Cnode(1,3){X3}
\Cnode(2,3.5){X4}
\Cnode*(0,1.5){X5}
\Cnode(1,2){X6}
\Cnode(1,1){X7}
\Cnode(2,1.5){NX4}
\Cnode(1,0){X8}
\pnode(2.8,2.5){E1}
\rput(3,2.5){\psframe(-.2,-.2)(.2,.2)}
\rput(0,3.9){$x_1$}
\rput(1,4.4){$x_2$}
\rput(1,3.4){$x_3$}
\rput(2,3.9){$x_4$}
\rput(0,1.9){$x_5$}
\rput(1,2.4){$x_6$}
\rput(1,1.4){$x_7$}
\rput(2,1.91){$\bar x_4$}
\rput(1,0.4){$x_8$}
\ncline{->}{X1}{X2}
\ncline{->}{X1}{X3}
\ncline{->}{X2}{X4}
\ncline{->}{X3}{X4}
\ncline{->}{X5}{X6}
\ncline{->}{X5}{X7}
\ncline{->}{X5}{X8}
\ncline{->}{X6}{NX4}
\ncline{->}{X7}{NX4}
\ncline{->}{X4}{E1}
\ncline{->}{NX4}{E1}
\end{pspicture}
\end{center}
\caption{Example of implication graph} \label{UP-example}
\end{figure}
\end{example}

$G$ is always acyclic because every added edge connects a new node. It is well known that the time
complexity of unit propagation is $O(|\phi|)$, where $|\phi|$ is the size of $\phi$ \cite<see, e.g.,>{Freeman95}.

We associate clause $c$=$\bar \ell_1 \vee \bar \ell_2  \vee ... \vee \bar \ell_{k-1} \vee \ell_k$ with node $\ell_k$ if node $\ell_k$ is added into $G$ because of $c$. Note that node $\ell_k$ does not have any incoming edge if and only if $c$ is unit ($k$=1), and the node has only one incoming edge if and only if $c$ is binary ($k$=2). Once $G$ is constructed, if $G$ contains both $\ell$ and $\bar \ell$ for some literal $\ell$ (i.e.,  unit propagation deduces a contradiction), it is easy to identify all nodes from which there exists a path to $\ell$ or $\bar \ell$ in $G$; i.e., the clauses implying $\ell$ or $\bar \ell$. All these clauses constitute an inconsistent subset $S$ of $\phi$. In the above example, clauses $x_1, \bar x_1 \vee x_2, \bar x_1 \vee x_3$ and $\bar x_2 \vee \bar x_3 \vee x_4$ imply
$x_4$, and clauses 
$x_5, \bar x_5 \vee x_6, \bar x_5 \vee x_7$ and $\bar x_6 \vee \bar x_7 \vee \bar x_4$ imply $\bar x_4$. Clause
$\bar x_5 \vee x_8$ does not contribute to the contradiction. The inconsistent subset $S$ is \{$x_1, \bar x_1 \vee x_2, \bar x_1 \vee x_3, \bar x_2 \vee \bar x_3 \vee x_4,
x_5, \bar x_5 \vee x_6, \bar x_5 \vee x_7, \bar x_6 \vee \bar x_7 \vee \bar x_4$\}. 

\subsection{Applicability of Rule \ref{linear2}, Rule \ref{linear3}, Rule \ref{nonlinear1}, and Rule \ref{nonlinear2}}

We assume that unit propagation deduces a contradiction and, therefore, the implication graph $G$ contains both $\ell$ and 
$\bar \ell$ for some literal $\ell$. Let $S_\ell$ be the set of all nodes from which there exists a path to $\ell$,
let $S_{\bar \ell}$ be the set of all nodes from which there exists a path to $\bar \ell$, 
and let $S$=$S_\ell \cup S_{\bar \ell}$. As a clause is associated with each node in $G$, we also
use $S$, $S_\ell$, and $S_{\bar \ell}$ to denote the set of clauses associated with the nodes in 
$S$, $S_\ell$, and $S_{\bar \ell}$,
respectively. Lemma~\ref{lemma2} and Lemma~\ref{lemma3} are used to detect the applicability of Rule~\ref{linear2}, Rule~\ref{linear3}, Rule~\ref{nonlinear1}, and Rule~\ref{nonlinear2}.

\begin{lemma} \label{lemma2} 
Rule~\ref{linear2} and Rule~\ref{linear3} are applicable if
\begin{enumerate}
\item in $S_\ell$ (resp. $S_{\bar \ell}$), there is one unit clause and all the other clauses are binary,
\item nodes in $S_\ell$ (resp. $S_{\bar \ell}$) form an implication chain starting at the unit clause and ending at $\ell$ (resp. $\bar \ell$),
\item $S_\ell \cap S_{\bar \ell}$ is empty.
\end{enumerate}
\end{lemma}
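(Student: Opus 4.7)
The plan is to show that under conditions (1)--(3), the union $S = S_\ell \cup S_{\bar\ell}$ is, up to relabeling of literals, literally the antecedent of Rule~\ref{linear3}, with Rule~\ref{linear2} arising as the special case $k = 1$. The argument is essentially a bookkeeping one on the implication graph; the only subtle step is reversing one of the two chains so that the two unit clauses end up at opposite ends of a single implication chain.

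First I would unpack $S_\ell$ and $S_{\bar\ell}$ explicitly. Condition (1) guarantees exactly one unit clause $u$ in $S_\ell$, and condition (2) forces the remaining clauses into the form
\[
S_\ell = \{u,\; \bar u \vee a_2,\; \bar a_2 \vee a_3,\; \ldots,\; \bar a_{i-1} \vee \ell\},
\]
witnessing the implication chain $u \to a_2 \to \cdots \to \ell$ along the edges of $G$. Analogously,
\[
S_{\bar\ell} = \{v,\; \bar v \vee b_2,\; \ldots,\; \bar b_{j-1} \vee \bar\ell\}.
\]
Condition (3) tells us the two chains share no clause.

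The key manoeuvre is the contrapositive reading of a binary clause: $\bar x \vee y$ is the same clause as $\bar{\bar y} \vee \bar x$, so the implication $x \to y$ can be read backwards as $\bar y \to \bar x$. Applying this to every clause of $S_{\bar\ell}$ and reversing their order turns that set into an implication chain $\ell \to \bar b_{j-1} \to \cdots \to \bar v$ starting at $\ell$ and ending at $\bar v$, whose complement is precisely the unit clause $v$. Concatenating this reversed chain onto $S_\ell$ and relabeling
\[
l_1 := u,\quad l_s := a_s\ (2 \le s \le i),\quad l_{i+t} := \bar b_{j-t}\ (1 \le t \le j-1),
\]
so that $l_i = \ell$, $l_{k+1} = \bar v$, and $k := i + j - 2$, makes $S$ literally equal to $\{l_1,\; \bar l_1 \vee l_2,\; \ldots,\; \bar l_k \vee l_{k+1},\; \bar l_{k+1}\}$, i.e., the antecedent of Rule~\ref{linear3}. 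Hence Rule~\ref{linear3} applies when $k \ge 2$, and Rule~\ref{linear2} applies when $k = 1$, which is exactly the base case treated in the inductive proof of Rule~\ref{linear3}.

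The only real obstacle is keeping the chain reversal and re-indexing tidy, and checking that the relabeled literals really do produce pairwise distinct clauses matching the template; the latter follows from acyclicity of $G$ together with the disjointness of the two chains provided by condition (3). No new algebra or soundness verification is required, since Rule~\ref{linear2} and Rule~\ref{linear3} have already been proved sound.
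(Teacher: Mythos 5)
Your proposal is correct and takes essentially the same approach as the paper: its one-sentence proof, read together with the remark following Example~\ref{Rule3and4} (which rewrites $c_7, c_6, c_5$ in contrapositive, reversed order so that $S$ matches $\phi_1$ of Rule~\ref{linear3}), is exactly your reverse-one-chain-and-concatenate argument left implicit. You have merely made the relabeling and the identification with the antecedent of Rule~\ref{linear3} (with Rule~\ref{linear2} as the case $k=1$) explicit.
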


  \begin{proof}
Starting from the node corresponding to the unit clause in $S_\ell$ (resp. $S_{\bar \ell}$), and following in parallel the two implication chains, 
we have $\phi_1$ in Rule~\ref{linear2} or Rule~\ref{linear3} by writing down the clause corresponding to each node. \Cqfd
  \end{proof}

\begin{example}\label{Rule3and4}
Let $\phi$ be the following CNF formula containing clauses $c_1$ to $c_7$:
$\{c_1: x_1, ~~c_2: \bar x_1 \vee x_2, ~~c_3: \bar x_2 \vee x_3,  ~~c_4: \bar x_3 \vee  x_4, ~~c_5: x_5, ~~c_6: \bar x_5 \vee  x_6, ~~c_7: \bar x_6 \vee  \bar x_4\}$.
Unit propagation constructs the implication graph shown in Figure~\ref{imp-graph1}, which contains the complementary literals $x_4$ and $\bar x_4$. 
 
\begin{figure} 
\begin{center}
\begin{pspicture}(0,0)(4,2)
\psset{arrowscale=1.7}
\cnodeput(0,1){X1}{$c_1$}
\cnodeput(1,1){X2}{$c_2$}
\cnodeput(2,1){X3}{$c_3$}
\cnodeput(3,1){X4}{$c_4$}
\cnodeput(0,0){X5}{$c_5$}
\cnodeput(1,0){X6}{$c_6$}
\cnodeput(2,0){NX4}{$c_7$}
\pnode(3.8,.5){E1}
\rput(4,.5){\psframe(-.2,-.2)(.2,.2)}
\rput(0,1.47){$x_1$}
\rput(1,1.47){$x_2$}
\rput(2,1.47){$x_3$}
\rput(3,1.47){$x_4$}
\rput(0,0.47){$x_5$}
\rput(1,0.47){$x_6$}
\rput(2,0.48){$\bar x_4$}
\ncline{->}{X1}{X2}
\ncline{->}{X2}{X3}
\ncline{->}{X3}{X4}
\ncline{->}{X5}{X6}
\ncline{->}{X6}{NX4}
\ncline{->}{X4}{E1}
\ncline{->}{NX4}{E1}
\end{pspicture}
\end{center}
\caption{Example of implication graph} \label{imp-graph1}
\end{figure}
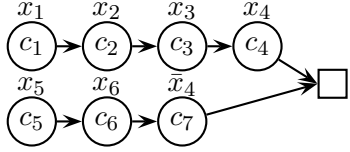
 
 Rule  \ref{linear3} is applicable because $\ell$=$x_4$, $S_\ell$=\{$x_1 (c_1), ~~x_2 (c_2), ~~x_3 (c_3),  ~~x_4 (c_4)$\}, and $S_{\bar \ell}$=\{$x_5 (c_5), ~~x_6 (c_6), ~~\bar x_4 (c_7)$\}. It is easy to verify that the three conditions of 
Lemma~\ref{lemma2} are satisfied.
 
 \end{example}
    
 \noindent {\bf Remark}: $\phi$ can be rewritten as $\{c_1: x_1, ~~c_2: \bar x_1 \vee x_2, ~~c_3: \bar x_2 \vee x_3,  ~~c_4: \bar x_3 \vee  x_4, ~~c_7: \bar x_4 \vee \bar x_6, ~~c_6:  x_6 \vee \bar x_5 , ~~c_5: x_5\}$ to be compared with $\phi_1$ in Rule 
\ref{linear3}.

The application of Rule~\ref{linear2} and Rule~\ref{linear3} consists of replacing every binary
clause $c$ in $S$ with a binary clause obtained by negating every literal of $c$, removing the two unit clauses
of $S$ from $\phi$, and incrementing  \#$emptyClauses$($\phi$) by 1.

\begin{lemma} \label{lemma3} 
Rule~\ref{nonlinear1} and Rule~\ref{nonlinear2} are applicable if 

\begin{enumerate}
\item in $S$=$S_\ell \cup S_{\bar \ell}$, there is one unit clause and all the other clauses are binary;
 i.e., all nodes in $S$, except for the node corresponding to the unit clause, have exactly one incoming edge in $G$.
\item $S_\ell \cap S_{\bar \ell}$ is non-empty and contains $k$ ($k >$0) nodes forming an implication chain of the form $\ell_1  \rightarrow \ell_2  \rightarrow~ \cdots~  \rightarrow \ell_{k}$, where $\ell_{k}$ is the last node of the chain.
\item ($S_\ell \cup S_{\bar \ell}$)-($S_\ell \cap S_{\bar \ell}$) contains exactly three nodes : $\ell$, $\bar \ell$, and a third one. Let $\ell_{k+1}$ be the third literal,

if $\ell_{k+1}$ $\in$ $S_\ell$, then $G$ contains the following implications
$$\ell_k \rightarrow \ell_{k+1} \rightarrow \ell$$
$$\ell_k \rightarrow \bar \ell$$

if $\ell_{k+1}$ $\in$ $S_{\bar \ell}$, then $G$ contains the following implications
$$\ell_k \rightarrow \ell$$
$$\ell_k \rightarrow \ell_{k+1} \rightarrow \bar \ell$$
\end{enumerate}
\end{lemma}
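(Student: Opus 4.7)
The plan is to reconstruct the clauses associated with the nodes in $S$ directly from the graph structure and verify that they coincide, up to a relabeling of literals, with the left-hand side $\phi_1$ of Rule~\ref{nonlinear2} when $k>1$, or of Rule~\ref{nonlinear1} when $k=1$. The key observation, supplied by condition~1, is that the clause at every non-unit node $\ell'$ in $S$ is fully determined: $\ell'$ has exactly one incoming edge, say from $\ell''$, so its associated clause must be $\bar\ell'' \vee \ell'$. Only the unique unit-clause node in $S$ contributes a length-one clause.

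First I would use conditions~1 and~2 to handle the chain. Since $G$ is acyclic and the chain $\ell_1 \to \ell_2 \to \cdots \to \ell_k$ lies inside $S_\ell \cap S_{\bar\ell}$, the only candidate for the unit-clause node is $\ell_1$: each of $\ell_2,\ldots,\ell_k$ already has an incoming edge along the chain (so its clause is binary), while the unit clause must lie at the source of every derivation path to both $\ell$ and $\bar\ell$. Reading off the clauses along the chain then produces the unit $\ell_1$ together with the binary clauses $\bar\ell_i \vee \ell_{i+1}$ for $i=1,\ldots,k-1$.

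Next I would handle condition~3 by a case analysis on whether $\ell_{k+1}\in S_\ell$ or $\ell_{k+1}\in S_{\bar\ell}$. In the case $\ell_{k+1}\in S_{\bar\ell}$, the prescribed implications $\ell_k \to \ell$ and $\ell_k \to \ell_{k+1} \to \bar\ell$ contribute the three additional binary clauses $\bar\ell_k \vee \ell$, $\bar\ell_k \vee \ell_{k+1}$, and $\bar\ell_{k+1} \vee \bar\ell$. Mapping the literals of Rule~\ref{nonlinear2} via $l_i \mapsto \ell_i$ for $i=1,\ldots,k+1$ and $l_{k+2}\mapsto \ell$ (so $\bar l_{k+2}\mapsto \bar\ell$), with Rule~\ref{nonlinear2}'s parameter set to $k-1$, then exhibits $S$ as exactly its $\phi_1$; when $k=1$ the chain vanishes and this pattern collapses to $\phi_1$ of Rule~\ref{nonlinear1}. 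The case $\ell_{k+1}\in S_\ell$ is handled symmetrically by swapping the roles of $\ell$ and $\bar\ell$ in the mapping, i.e., setting $l_{k+2}\mapsto \bar\ell$; the three extra clauses $\bar\ell_k \vee \ell_{k+1}$, $\bar\ell_{k+1} \vee \ell$, $\bar\ell_k \vee \bar\ell$ then again align with Rule~\ref{nonlinear2}'s ``fork plus short tail'' template.

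The main obstacle will be the bookkeeping across the two subcases of condition~3: one has to verify that both orientations of $\ell_{k+1}$ yield, after the appropriate relabeling, exactly the template of Rule~\ref{nonlinear2}, and that the multiset of clauses read off from $G$ exhausts $S$ without omission or duplication. Once condition~1 is used to pin down the form of each individual clause and condition~2 anchors the chain at the unit-clause node, the remainder of the verification is essentially mechanical and the applicability conditions of Rule~\ref{nonlinear2} (and of Rule~\ref{nonlinear1} in the degenerate case $k=1$) are witnessed directly.
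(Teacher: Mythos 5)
Your proposal is correct and follows essentially the same route as the paper: read off the clause associated with each node of $S$ from the graph structure (the chain in $S_\ell \cap S_{\bar\ell}$ yielding the unit clause $\ell_1$ and the binary clauses $\bar\ell_i \vee \ell_{i+1}$, and the fork yielding the remaining three binary clauses), then match the result against $\phi_1$ of Rule~\ref{nonlinear1} or Rule~\ref{nonlinear2}. You are somewhat more explicit than the paper --- which treats only one case ``without loss of generality'' and omits the relabeling details --- but the underlying argument is identical.
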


\begin{proof}
Assume, without loss of generality, that $\ell_{k+1}$ $\in$ $S_\ell$; the case $\ell_{k+1}$ $\in$ $S_{\bar \ell}$ 
is symmetric. The implication chain formed by the nodes of $S_\ell \cap S_{\bar \ell}$ corresponds to 
the clauses \{$\ell_1$, $\bar \ell_1 \vee \ell_2$, \ldots, $\bar \ell_{k-1} \vee \ell_k$\}, which, together with the 
three clauses \{$\bar \ell_k \vee \ell_{k+1}, \bar \ell_{k+1} \vee \ell$, $\bar \ell_k \vee \bar \ell$\}  
corresponding to $\ell_k \rightarrow \ell_{k+1} \rightarrow \ell$ and $\ell_k \rightarrow \bar \ell$, give
 $\phi_1$ in Rule \ref{nonlinear1} or Rule \ref{nonlinear2}.  \Cqfd
 \end{proof}

\begin{example}\label{Rule5and6}
Let $\phi$ be the following CNF formula containing clauses $c_1$ to $c_5$:
\{$c_1: x_1, ~~c_2: \bar x_1 \vee x_2, 
~~c_3: \bar x_2 \vee x_3, ~~c_4: \bar x_2 \vee  x_4, ~~c_5: \bar x_3 \vee \bar x_4$\}. 
Unit propagation constructs the implication graph shown in Figure~\ref{imp-graph2}, which contains the complementary literals $x_4$ and $\bar x_4$. 

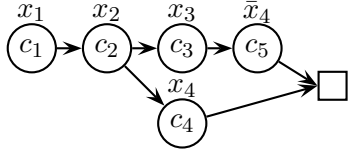
\begin{figure} 
\begin{center}
\begin{pspicture}(0,0)(4,2)
\psset{arrowscale=1.7}
\cnodeput(0,1){X1}{$c_1$}
\cnodeput(1,1){X2}{$c_2$}
\cnodeput(2,1){X3}{$c_3$}
\cnodeput(2,0){NX4}{$c_4$}
\cnodeput(3,1){X4}{$c_5$}
\pnode(3.8,.5){E1}
\rput(4,.5){\psframe(-.2,-.2)(.2,.2)}
\rput(0,1.47){$x_1$}
\rput(1,1.47){$x_2$}
\rput(2,1.47){$x_3$}
\rput(3,1.48){$\bar x_4$}
\rput(2,0.47){$x_4$}
\ncline{->}{X1}{X2}
\ncline{->}{X2}{X3}
\ncline{->}{X3}{X4}
\ncline{->}{X2}{NX4}
\ncline{->}{X4}{E1}
\ncline{->}{NX4}{E1}
\end{pspicture}
\end{center}
\caption{Example of implication graph} \label{imp-graph2}
\end{figure}

We have $S_{x_4}$=\{$x_1(c_1)$,  $ x_2 (c_2)$, $x_4 (c_4)$\} and $S_{\bar x_4}$=\{$x_1(c_1)$,  
$x_2 (c_2)$, $x_3 (c_3)$, $\bar x_4 (c_5)$\}. The nodes in $S_{x_4} \cap S_{\bar x_4}$
obviously form an implication chain: $x_1 \rightarrow x_2$.  ($S_{x_4} \cup S_{\bar x_4}$)-($S_{x_4} \cap S_{\bar x_4}$)=\{$x_3 (c_3)$, $x_4 (c_4)$, $\bar x_4 (c_5)$\}. $G$ contains $x_2  \rightarrow x_3  \rightarrow \bar x_4$ and $x_2  \rightarrow x_4$. Rule \ref{nonlinear2} is applicable.
 \end{example}
 
The  application of Rule \ref{nonlinear1} and Rule \ref{nonlinear2} consists of removing the unit clause of 
$S_\ell \cup S_{\bar \ell}$ 
from $\phi$, replacing
each binary clause $c$ in $S_\ell \cap S_{\bar \ell}$ with a binary clause obtained from $c$ by negating the two literals of $c$, 
replacing the three binary clauses in ($S_\ell \cup S_{\bar \ell}$)-($S_\ell \cap S_{\bar \ell}$) with two ternary
clauses, and incrementing \#$emptyClauses$($\phi$) by 1.

\subsection{Complexity, Termination, and (In)Completeness of Rule Applications}

In our branch and bound algorithm for Max-SAT, 
we combine the application of the inference rules and the
computation of the underestimation of the lower bound. 
Given a CNF formula $\phi$, function $underestimation$ uses unit propagation to construct an implication graph $G$. 
Once $G$ contains two nodes $\ell$ and $\bar \ell$ for some literal $\ell$, $G$ is analyzed to determine whether some
inference rule is applicable. If some rule is applicable, it is applied and $\phi$ is transformed into an equivalent
Max-SAT instance. Otherwise, all clauses contributing to the contradiction are removed from $\phi$, and the underestimation is incremented by 1. This procedure is repeated until unit propagation cannot derive more contradictions. Finally, all removed clauses, except those removed or replaced due to inference rule applications, are reinserted into $\phi$. The underestimation, together with the new $\phi$, is returned.

It is well known that unit propagation can be implemented with a time complexity linear in the size of 
$\phi$ \cite<see, e.g.,>{Freeman95}. 
The complexity of determining the applicability of the inference rules using Lemma \ref{lemma2} 
and Lemma \ref{lemma3} is linear in the size of $G$,  bounded by the number of literals in $\phi$, if we assume
that the graph is represented by doubly-linked lists. 
The application of an inference rule is obviously linear in the size of $G$. So, the whole time complexity of function $underestimation$ with inference rule applications is $O(d*|\phi|)$, where $d$ is the number of contradictions that function $underestimation$ is able to detect using unit propagation.
Observe that the factor $d$ is needed because the application of the rules inserts new clauses in the place of the removed clauses.

Since every inference rule application reduces the size of $\phi$,  function $underestimation$ with 
inference rule applications has linear space complexity, and it always terminates. 
Recall that new clauses added by the inference rules can be stored
in the place of the old ones. The data structures for loading $\phi$ can be statically and efficiently maintained.

We have proved that the inference rules are sound.
The following example shows that the application of the rules is not necessarily complete in our implementation,
in the sense that not all possible applications of the inference rules are necessarily done.

\begin{example} \label{incompleteness}
Let $\phi$=\{$x_1, x_3, x_4, \bar x_1 \vee \bar x_3 \vee \bar x_4, \bar x_1 \vee \bar x_2, x_2$\}.
Unit propagation may discover the inconsistent subset $S$=\{$x_1, x_3, x_4, \bar x_1 \vee \bar x_3 \vee \bar x_4$\}. In this case, no inference rule is applicable to $S$. Then, the underestimation of the lower bound is incremented by~1, and $\phi$ becomes \{$\bar x_1 \vee \bar x_2, x_2$\}. Unit propagation cannot detect more contradictions in $\phi$, 
and function $underestimation$ stops after reinserting \{$x_1, x_3, x_4, \bar x_1 \vee \bar x_3 \vee \bar x_4$\} into $\phi$. The value 1 is returned, together with the unchanged $\phi$. Note that Rule~\ref{linear2} is applicable to the subset \{$x_1, \bar x_1 \vee \bar x_2, x_2$\}  of $\phi$, but is not applied.

Actually,  function $underestimation$ only applies Rule~\ref{linear2} if unit propagation detects the inconsistent subset \{$x_1, \bar x_1 \vee \bar x_2, x_2$\} instead of \{$x_1, x_3, x_4, \bar x_1 \vee \bar x_3 \vee \bar x_4$\}.
The detection of an inconsistent subset depends on the ordering in which unit clauses are propagated in unit propagation.
In this example, the inconsistent subset \{$x_1, \bar x_1 \vee \bar x_2, x_2$\} is discovered if unit clause $x_2$ is propagated before $x_3$ and $x_4$. Further study is needed to define orderings for unit clauses that maximize the application of inference rules.
\end{example}

Observe that our algorithm is deterministic, and always computes the same lower bound if the order of clauses is not changed.

\subsection{Inference Rules for Weighted Max-SAT}
\label{weightedRule}

The inference rules presented in this paper can be naturally extended to weighted Max-SAT. In weighted Max-SAT, every clause is associated with a weight and the problem consists of finding a truth assignment for which the sum of the weights of unsatisfied clauses is minimum. For example, 
the weighted version of Rule \ref{linear2} could be

\begin{theoreme}  \label{weightedLinear2} 
If ${\phi}_1$=\{$(l_1, w_1), ~(\bar{l}_1 \vee \bar{l}_2, w_2), ~(l_2, w_3)\} \cup \phi'$, then ${\phi}_2$=\{$(\Box, w),~ (l_1 \vee l_2, w),~(l_1, w_1-w), ~(\bar{l}_1 \vee \bar{l}_2, w_2-w), ~(l_2, w_3-w)\} \cup \phi'$ is equivalent to ${\phi}_1$
\end{theoreme}

\noindent where $w_1$, $w_2$ and $w_3$ are positive integers representing the clause weight, and $w$=min($w_1$, $w_2$, $w_3$). Mandatory clauses, that have to be satisfied in any optimal solution, are specified with the weight $\infty$. Note that if $w$$\neq$$\infty$, $\infty$-$w$=$\infty$ and if $w$=$\infty$, no optimal solution can be found and the solver should backtrack. Clauses with weight 0 are removed. Observe that $\phi_1$ can be rewritten as $\phi_{11} \cup \phi_{12}$,
where $\phi_{11}$=\{$(l_1, w), ~(\bar{l}_1 \vee \bar{l}_2, w), ~(l_2, w)$\}, and $\phi_{12}$=\{$(l_1, w_1-w), ~(\bar{l}_1 \vee \bar{l}_2, w_2-w), ~(l_2, w_3-w)$\} $\cup$ $\phi'$. Then, the weighted inference rule is equivalent to the unweighted version applied $w$ times to the (unweighted) clauses of $\phi_{11}$.

Similarly, the weighted version of Rule \ref{linear3} could be

\begin{theoreme} \label{weightedlinear3}
If ${\phi}_1$=\{$(l_1, w_1) ~({\bar l}_1 \vee l_2, w_2), ~({\bar l}_2 \vee l_3, w_3),~ \ldots,~({\bar l}_{k} \vee l_{k+1}, w_{k+1}), ~({\bar l}_{k+1}, w_{k+2})\} \cup \phi'$,
then ${\phi}_2$=\{$(\Box,w), ~(l_1 \vee {\bar l}_2, w), ~(l_2 \vee {\bar l}_3, w), ~ \ldots, ~(l_{k} \vee {\bar l}_{k+1}, w), ~(l_1, w_1-w), ~({\bar l}_1 \vee l_2, w_2-w), ~({\bar l}_2 \vee l_3, w_3-w),~ \ldots,~({\bar l}_{k} \vee l_{k+1}, w_{k+1}-w), ~({\bar l}_{k+1}, w_{k+2}-w)\} \cup \phi'$ is equivalent to
${\phi}_1$
\end{theoreme}

\noindent where $w$=min($w_1$, $w_2$, \ldots, $w_{k+2}$). Observe that $\phi_1$ can also be rewritten as $\phi_{11} \cup \phi_{12}$, with $\phi_{11}$=\{$(l_1, w) ~({\bar l}_1 \vee l_2, w), ~({\bar l}_2 \vee l_3, w),~ \ldots,~({\bar l}_{k} \vee l_{k+1}, w), ~({\bar l}_{k+1}, w)$\}, The weighted version of Rule \ref{linear3} is equivalent to the unweighted Rule \ref{linear3} applied $w$ times to the (unweighted) clauses of $\phi_{11}$.

The current implementation of the inference rules can be naturally extended to weighted inference rules. If an inconsistent subformula is detected and a rule is applicable (clause weights are not considered in the detection of the inconsistent subformula and of the applicability of the rule, provided that clauses with weight 0 have been discarded),  then $\phi_{11}$ and $\phi_{12}$ are separated after computing the minimal weight $w$ of all clauses in the detected inconsistent subformula, and the rule is applied to $\phi_{11}$. The derived clauses and clauses in $\phi_{12}$ can be used in subsequent reasoning.

\section{MaxSatz: a New Max-SAT Solver} \label{MaxSatz}

We have implemented a new Max-SAT solver, called MaxSatz, that incorporates the lower bound computation method based on 
unit propagation defined in Section~\ref{basic-solver},
and applies  the inference rules defined in Section~\ref{inference-rules}. The name of MaxSatz comes from the fact that the implementation of 
our algorithm incorporates most of the technology that was developed for the SAT solver Satz~\cite{LA97a,LA97b}.

MaxSatz incorporates the lower bound based on unit propagation, and applies
Rule~\ref{resolution},  Rule~\ref{linear1}, Rule~\ref{linear2}, Rule~\ref{linear3}, Rule~\ref{nonlinear1}, and Rule~\ref{nonlinear2}. In addition, MaxSatz applies the following techniques:

\begin{itemize}

\item Pure literal rule: If a literal only appears with either positive polarity or negative polarity, we delete the clauses containing that literal.

\item Empty-Unit clause rule \cite{AMP03a}:  Let $neg1(x)$ ($pos1(x)$) be the number of unit clauses in which $x$ is negative (positive). If $\#emptyClauses(\phi)+neg1(x) \geq UB$, then we assign $x$ to false. If $\#emptyClauses(\phi)+pos1(x) \geq UB$, then we assign $x$ to true.

\item Dominating Unit Clause (DUC) rule \cite{NR00}: If  the  number of clauses  in which a literal $x$ ($\bar x$) appears is 
not greater than $neg1(x)$ ($pos1(x)$), then we assign~$x$ to false (true). 

\item Variable selection heuristic: Let $neg2(x)$ ($pos2(x)$) be the number of binary clauses in which $x$ is negative 
(positive), and let $neg3(x)$ ($pos3(x)$) be the number of clauses containing three or more literals in which $x$ is negative (positive). 
We select the variable $x$ such that ($neg1(x)+4*neg2(x)+neg3(x)$)*($pos1(x)+4*pos2(x)+pos3(x)$) is the largest. The fact that binary clauses are counted four times more than other clauses was determined empirically.

\item Value selection heuristic: Let $x$ be the selected branching variable. If $neg1(x)+4*neg2(x)+neg3(x)<pos1(x)+4*pos2(x)+pos3(x)$, set $x$ to true. Otherwise set $x$ to false. This heuristics was also determined empirically.
\end{itemize}

In this paper, in order to compare the inference rules defined, we have used three simplified versions of MaxSatz:
\begin{itemize}
\item MaxSat0: does not apply any inference rule defined in Section~\ref{inference-rules}.

\item MaxSat12: applies rules 1 and 2, but not rules 3, 4, 5 and 6.

\item MaxSat1234: applies rules 1, 2, 3 and 4, but not rules 5 and 6.

\end{itemize}

Actually, MaxSatz corresponds to MaxSat123456 in our terminology. MaxSat12 corresponds to an improved version of 
the solver $UP$ \cite{LMP05}, using a special ordering for propagating unit clauses in unit propagation. 
MaxSat12 maintains two queues during 
unit propagation: $Q_1$ and $Q_2$. When  MaxSat12 starts the search for an inconsistent subformula via unit propagation,
$Q_1$ contains all the unit clauses of the CNF formula under consideration (more recently
derived unit clauses are at the end of $Q_1$), and $Q_2$ is empty. The unit clauses derived during the application
of unit propagation are stored in $Q_2$, and unit propagation
does not use any unit clause from $Q_1$ unless $Q_2$ is empty. Intuitively, this ordering prefers unit clauses which were non-unit clauses before starting the application of unit propagation. This way, the derived inconsistent subset contains, in
general, less unit clauses. The unit clauses which have not been consumed will contribute to detect further inconsistent subsets. Our experimental results~\cite{LMP06} show that the search tree size of MaxSat12 is substantially smaller than that of UP, and MaxSat12 is substantially faster than UP. MaxSat0, Maxsat1234, and MaxSatz use the same ordering as MaxSat12 for propagating unit clauses in unit propagation.

The source code of MaxSat0, MaxSat12, MaxSat1234, and MaxSatz can be found at 
http://web.udl.es/usuaris/m4372594/jair-maxsatz-solvers.zip, and at
http://www.laria.u-picardie.fr/\~{}cli/maxsatz.tar.gz.

\section{Experimental Results}
\label{Experiments}

We report on the experimental investigation performed for unweighted Max-SAT in order to evaluate the inference rules defined in
Section~\ref{inference-rules}, and to compare MaxSatz with the best performing state-of-the-art solvers that were publicly available when this paper was submitted. 
The experiments were performed on a Linux Cluster with processors 2~GHz AMD Opteron with 1~Gb of RAM.

The structure of this section is as follows. We first describe the solvers and benchmarks that we have considered. Then,
we present the experimental evaluation of the inference rules. Finally, we show the experimental comparison
of MaxSatz with other solvers.

\subsection{Solvers and Benchmarks} \label{solvers-benchmarks}

MaxSatz was compared with the following Max-SAT solvers:

\begin{itemize}

\item \texttt{BF}\footnote{Downloaded in October 2004 from http://infohost.nmt.edu/\~{}borchers/satcodes.tar.gz}~\cite{BF99}: a branch and bound Max-SAT solver which
uses MOMS as dynamic variable selection heuristic and does not consider underestimations in the
computation of the lower bound.
It was developed by Borchers and Furman in 1999.

\item \texttt{AGN}\footnote{Downloaded in October 2005 from 
http://www-fs.informatik.uni-tuebingen.de/\~{}gramm/ }~\cite{AGN98}: a branch and bound Max-2SAT solver.
It was developed by Alber, Gramm and Niedermeier in 1998.

\item \texttt{AMP}\footnote{Available at http://web.udl.es/usuaris/m4372594/software.html}~\cite{AMP03}: a branch and bound Max-SAT solver based on BF that incorporates a lower bound of better quality and
the Jeroslow-Wang variable selection heuristic~\cite{JeroslowWang90}. It was developed by Alsinet, Many\`a and Planes and  presented
at SAT-2003.

\item \texttt{toolbar}\footnote{Downloaded in October 2005 from http://carlit.toulouse.inra.fr/cgi-bin/awki.cgi/ToolBarIntro}~\cite{GLMS03,LH05a}: a Max-SAT solver whose inference was inspired in soft arc consistency properties implemented in weighted CSP solvers.
It was developed by  de~Givry, Larrosa, Meseguer and Schiex and was first presented at CP-2003.
We used version 2.2 with default parameters.

\item \texttt{MaxSolver\footnote{Downloaded in October 2005 from http://cic.cs.wustl.edu/maxsolver/ }}~\cite{XZ04}: a branch and bound Max-SAT solver that applies a number of efficient
inference rules. It was developed by Xing and Zhang
and presented at CP-2004. We used the second release of this solver.

\item \texttt{Lazy}\footnote{Available at http://web.udl.es/usuaris/m4372594/software.html}~\cite{AMP05}: a branch and bound Max-SAT 
solver with lazy data structures and a static
variable selection heuristic. It was developed by Alsinet, Many\`a and Planes and  presented
at SAT-2005.

\item \texttt{UP\footnote{Available at http://web.udl.es/usuaris/m4372594/software.html}}~\cite{LMP05}: a branch and bound Max-SAT solver with the lower bound computation method
based on unit propagation (cf.~Section~\ref{basic-solver}). It was developed by Li, Many\`a and Planes and  presented
at CP-2005.

\end{itemize}


We used as benchmarks randomly generated Max-2SAT instances and Max-3SAT instances, graph 3-coloring 
instances\footnote{Given an undirected graph $G=(V,E)$, where $V=\{x_1, \ldots, x_n\}$
is the set of vertices and $E$ is the set of edges, and a set of three colors, the graph 3-coloring problem is 
the problem of coloring every vertex with one of the three colors in such a way that, for each edge $(x_i, x_j) \in E$, vertex $x_i$ and vertex $x_j$ do not have the same color.}, as well as
Max-Cut instances\footnote{Given an undirected graph $G=(V,E)$, let $w_{x_i,x_j}$ be the weight associated with
each edge $(x_i, x_j) \in E$. The weighted Max-Cut problem is to find a subset $S$ of $V$ such that
$W(S, \overline{S}) = \sum_{x_i \in S, x_j \in \overline{S}} w_{x_i,x_j}$ is maximized, where
$\overline{S}= V- S$. In this paper, we set weight $w_{x_i,x_j}=1$ for all edges.}.
We also considered the unweighted Max-SAT benchmarks submitted to the Max-SAT Evaluation 2006, including Max-Cut, Max-Ones, Ramsey numbers, and random Max-2SAT and Max-3SAT instances. 

We generated Max-2SAT instances and  Max-3SAT instances using the generator \texttt{mwff.c}  
developed by Bart Selman, which allows for duplicated clauses. For Max-Cut, we first generated
a random graph of $m$ edges in which every edge is randomly selected among the set of all possible edges. If the graph is not connected,  it is discarded. If the graph is connected, we used the 
encoding of~\citeauthor{SZ05}~\citeyear{SZ05} to encode the Max-Cut instance into a CNF:  we created, for each 
edge $(x_i, x_j)$, exactly
two binary clauses $(x_i \vee x_j)$ and  $(\bar x_i \vee \bar x_j)$. If $\phi$ is the collection of
such binary clauses, then the Max-Cut instance has a cut of weight $k$ iff the Max-SAT instance has
an assignment under which $m+k$ clauses are satisfied. 

For graph 3-coloring, we first used 
Culberson's generator to generate a random $k$-colorable graph of type IID (independent random 
edge assignment, variability=0) with $k$ vertices and a fixed edge
 density. 
We then used Culberson's converter to SAT with standard conversion and three colors 
to generate a Max-SAT instance: 
for each vertex $x_i$ and for each color $j \in \{1,2,3\}$, a propositional variable $x_{ij}$ is defined 
meaning that vertex $i$ is colored with color $j$. For each vertex $x_i$, four clauses are added to encode that the vertex is colored with exactly one color: $x_{i1} \vee x_{i2} \vee x_{i3}$, $\bar x_{i1} \vee \bar x_{i2}$,  $\bar x_{i1} \vee \bar x_{i3}$, and $\bar x_{i2} \vee \bar x_{i3}$; and, for each edge $(x_i, x_j)$, three clauses are added to encode that vertex $x_i$ and vertex $x_j$ do not have the same color: $\bar x_{i1} \vee \bar x_{j1}$, $\bar x_{i2} \vee \bar x_{j2}$, and $\bar x_{i3} \vee \bar x_{j3}$.

In random Max-2SAT and Max-3SAT instances, clauses are entirely independent to each other and do not have  structure. 
In the graph 3-coloring instances and Max-Cut instances used in this paper, clauses are not independent and have structure.
For example, in a Max-Cut instance, every time we have a clause $x_i \vee x_j$, we also have the clause $\bar x_i \vee \bar x_j$; the satisfaction of these two clauses means that the corresponding edge is in the cut. In a graph 3-coloring instance, every time we have a ternary clause $x_{i1} \vee x_{i2} \vee x_{i3}$ encoding that vertex $i$ is colored with at least a color, we also have
three binary clauses $\bar x_{i1} \vee \bar x_{i2}$, $\bar x_{i1} \vee \bar x_{i3}$, and $\bar x_{i2} \vee \bar x_{i3}$ encoding 
that vertex $i$ cannot be colored with two or more colors. Max-Cut instances only contain binary clauses, but
graph 3-coloring instances contain a ternary clause for every vertex in the graph. While 
we can derive an optimal cut from an optimal assignment of a Max-SAT encoding of any Max-Cut instance, an optimal assignment of a Max-SAT encoding of a 3-coloring instance
may assign more than one color to some vertices. 

The Max-Cut and Ramsey numbers instances from the Max-SAT Evaluation 2006 contain different structures. 
For example, the underlying graphs in the Max-Cut instances have different origins such as fault diagnosis problems, coding theory problems, and graph clique problems. Max-2SAT and Max-3SAT instances from the evaluation do not contain duplicated clauses.

We computed an initial upper bound with a local search solver for each instance.
We did not provide any parameter to any solver except the instance to be solved and the initial upper bound. 
In other words, we used the default values for all the parameters. The instances from
the Max-SAT Evaluation 2006 were solved in the same conditions as in the evaluation; i.e., no 
initial upper bound was provided to the solvers, and the maximum time allowed to solve an instance was 30 minutes.

\subsection{Evaluation of the Inference Rules}

In the first experiment performed to evaluate the impact of the inference rules of Section~\ref{inference-rules},
we solved sets of 100 random Max-2SAT instances with 50 and 100 variables; the number of clauses 
ranged from 400 to 4500 for 50 variables,
and from 400 to 1000 for 100 variables. The results obtained are shown in Figure~\ref{rule2sat}. 
Along the horizontal axis is the number of clauses, and along the vertical axis
is the mean time (left plot), in seconds, needed to solve an instance of a set,
and the mean number of branches of the proof tree (right plot). 
Notice that we use a log scale to represent both run-time and branches.

We observe that the rules are very powerful for Max-2SAT and the gain increases as the number of variables and 
the number of clauses increase.
For 50 variables and 1000 clauses (the clause to variable ratio is 20), MaxSatz is 7.6 times faster than MaxSat1234; and for 100 variables and 1000 
clauses (the clause to variable ratio is 10),  MaxSatz is 9.2 times faster than MaxSat1234. The search tree of MaxSatz is also substantially smaller than that of MaxSat1234. 
Rule~\ref{nonlinear1} and Rule~\ref{nonlinear2} are more powerful than Rule~\ref{linear2} and Rule~\ref{linear3} for Max-2SAT. 
The intuitive explanation is that MaxSatz and MaxSat1234
detect many more inconsistent subsets of clauses containing one unit clause than subsets 
containing two unit clauses, so that Rule~\ref{nonlinear1} and Rule~\ref{nonlinear2} can 
be applied many more times than Rule~\ref{linear2} and Rule~\ref{linear3} in MaxSatz.

Recall that, on the one hand, every application of Rule~\ref{linear2} and Rule~\ref{linear3} consumes two unit clauses but only produces one empty clause, limiting unit propagation in detecting more conflicts in subsequent search. On the other hand, Rule~\ref{linear2} and Rule~\ref{linear3} add clauses which may contribute to detect further conflicts. Depending on the number of clauses (or more precisely, the clause to variable ratio) in a formula, these two factors have different importance. When there are relatively few clauses, unit propagation relatively does not easily derive a contradiction from a unit clause, and the binary clauses added by Rule~\ref{linear2} and Rule~\ref{linear3} are relatively important for deriving additional conflicts and improving the lower bound, which explains why the search tree of MaxSat1234 is smaller than the search tree of MaxSat12 for instances with 100 variables and less than 600 clauses. On the contrary, 
when there are many clauses, unit propagation easily derives a contradiction from a unit clause, so that the two unit clauses consumed by Rule~\ref{linear2} and Rule~\ref{linear3}  would probably allow to derive two disjoint inconsistent subsets of clauses. In addition, the binary clauses added by Rule \ref{linear2} and Rule \ref{linear3} are relatively less important for deriving additional conflicts, considering 
the large number of clauses in the formula. In this case, the search tree of MaxSat1234 is larger than the
search tree of MaxSat12. However, in both cases, MaxSat1234 is faster that MaxSat12, meaning that the incremental 
lower bound computation due to Rule~\ref{linear2} and Rule~\ref{linear3} is very effective, since the redetection of many conflicts is avoided thanks to Rule~\ref{linear2} and 
Rule~\ref{linear3}.

\begin{figure}[H]
\begin{tabular}{cc}
\includegraphics[scale=.5]{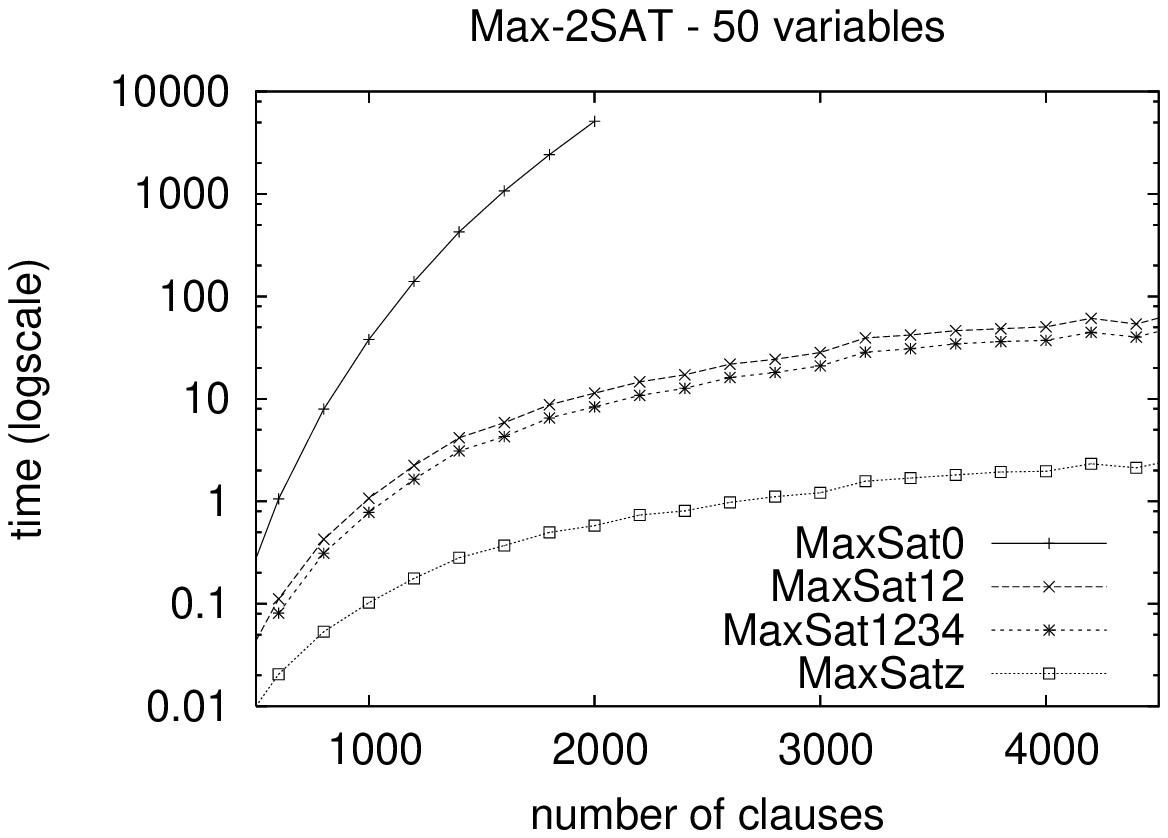} & \includegraphics[scale=.5]{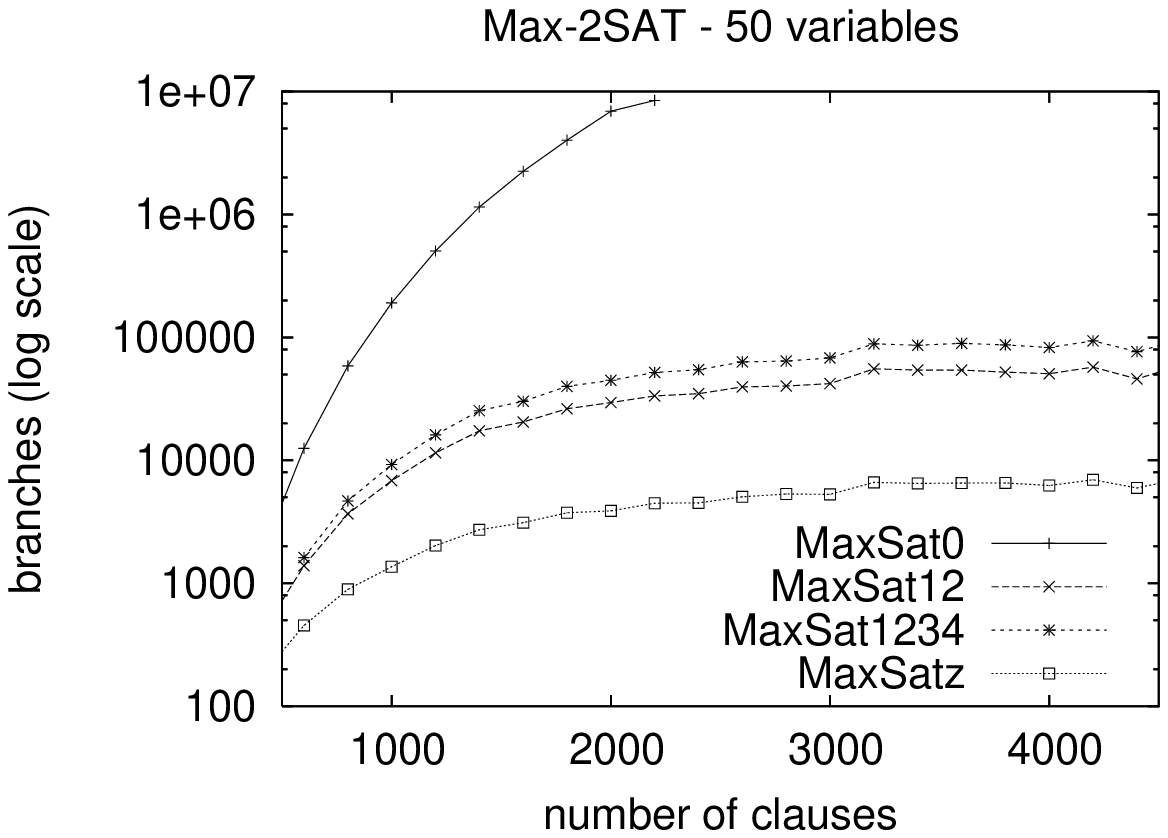} \\
\includegraphics[scale=.5]{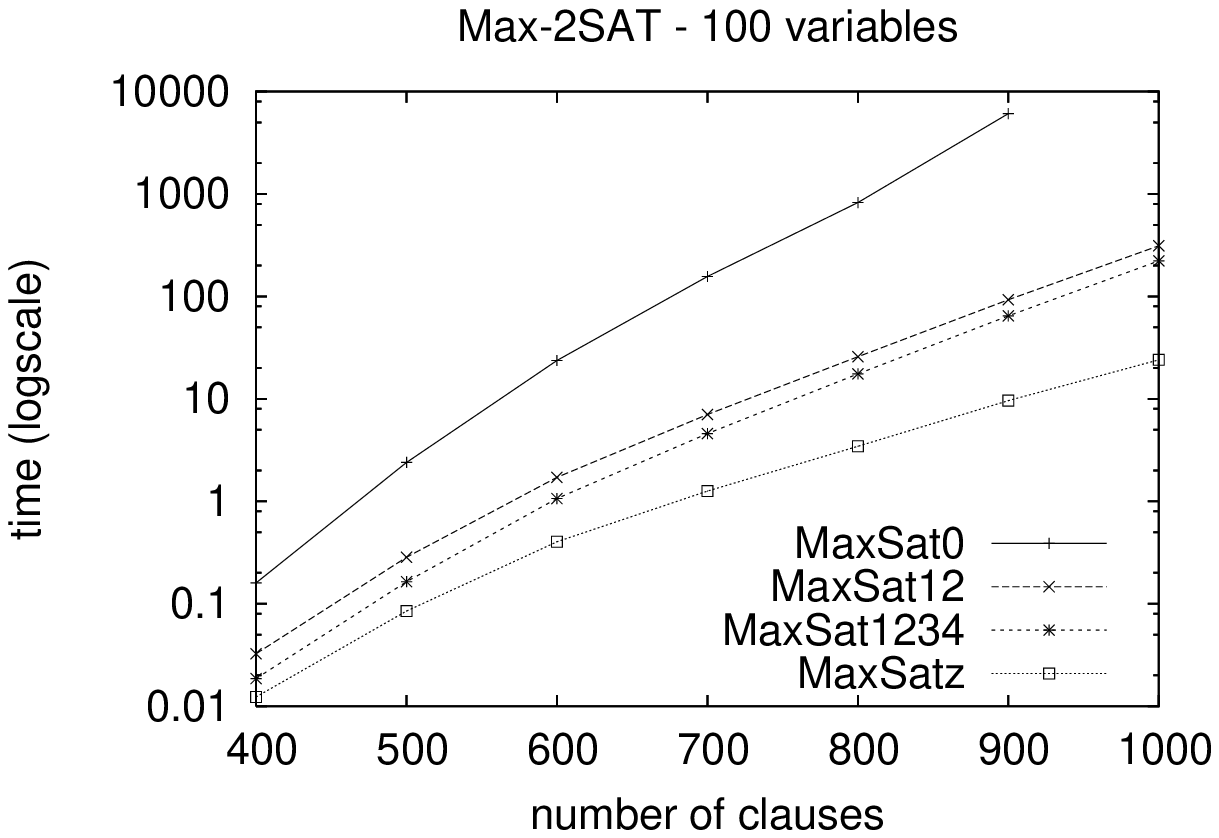} & \includegraphics[scale=.5]{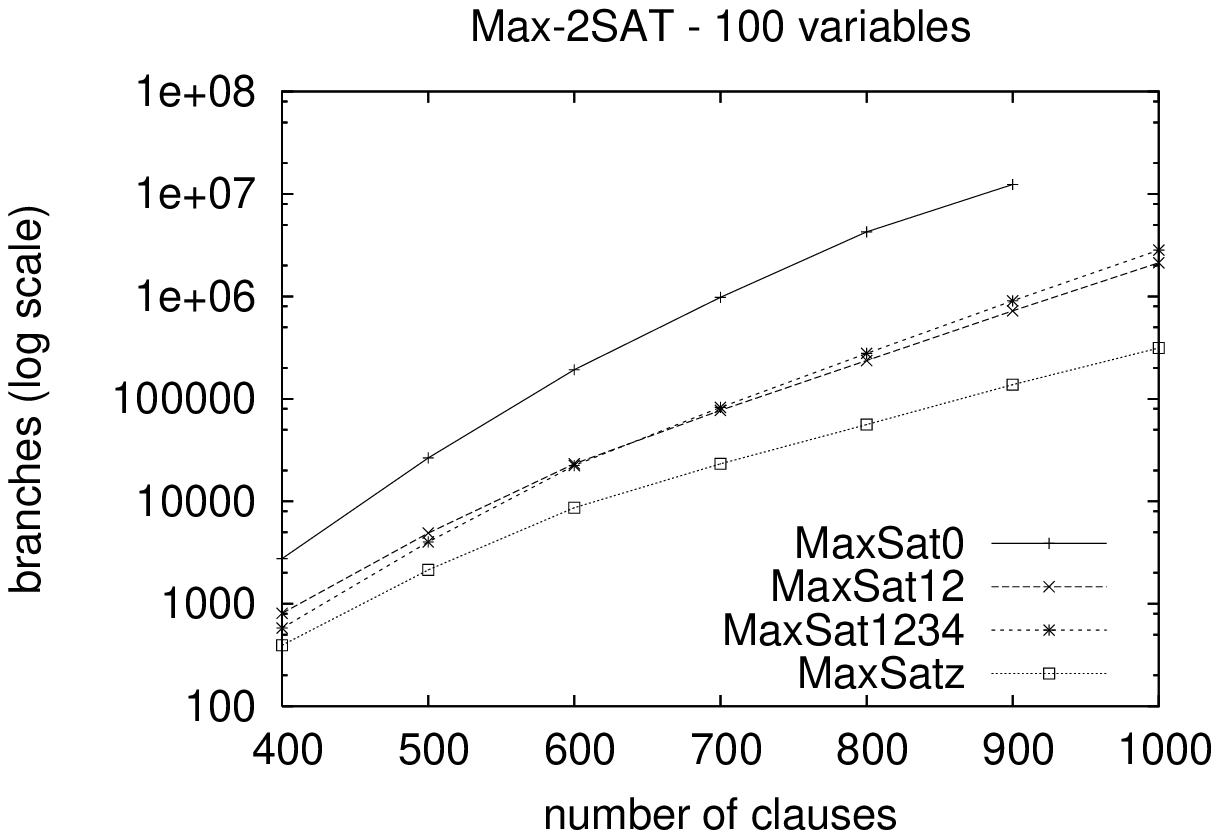} \\
\end{tabular}
\caption{\small Comparison among MaxSat12, MaxSat1234 and MaxSatz on random Max-2SAT instances.}\label{rule2sat}
\end{figure}

Rule \ref{nonlinear1} and Rule \ref{nonlinear2} do not limit unit propagation in detecting more conflicts, since their application produces one empty clause and consumes just one unit clause, which allows to derive at most one conflict in any case. 
The added ternary clauses allow to improve the lower bound, so 
that the search tree of MaxSatz is substantially smaller than the search tree of MaxSat1234. The incremental 
lower bound computation due to Rule \ref{nonlinear1} and Rule \ref{nonlinear2} also contributes to the time performance of MaxSatz.
 For example, while the search tree of MaxSatz for instances with 50 variables and 2000 clauses is about 11.5 times smaller than the search tree of MaxSat1234, 
MaxSatz is 14 times faster than MaxSat1234. 

In the second experiment, we solved random Max-3SAT instances instead of random Max-2SAT instances.
We solved instances with 50 and 70 variables; the number of clauses ranged from 400 to 1200 for 50 variables,
and from 500 to 1000 for 70 variables. The results obtained are shown in Figure~\ref{rule3sat}.

\begin{figure}[H]
\begin{tabular}{cc}
\includegraphics[scale=.5]{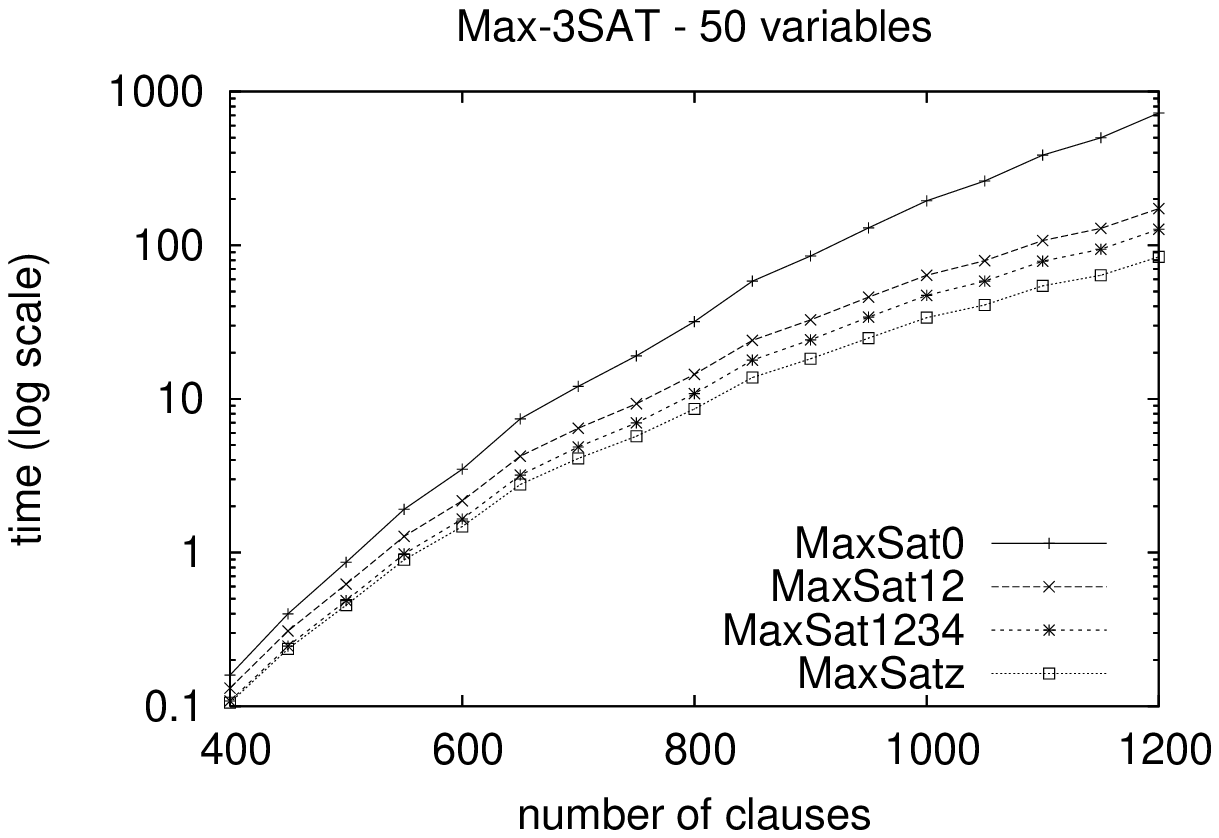} & \includegraphics[scale=.5]{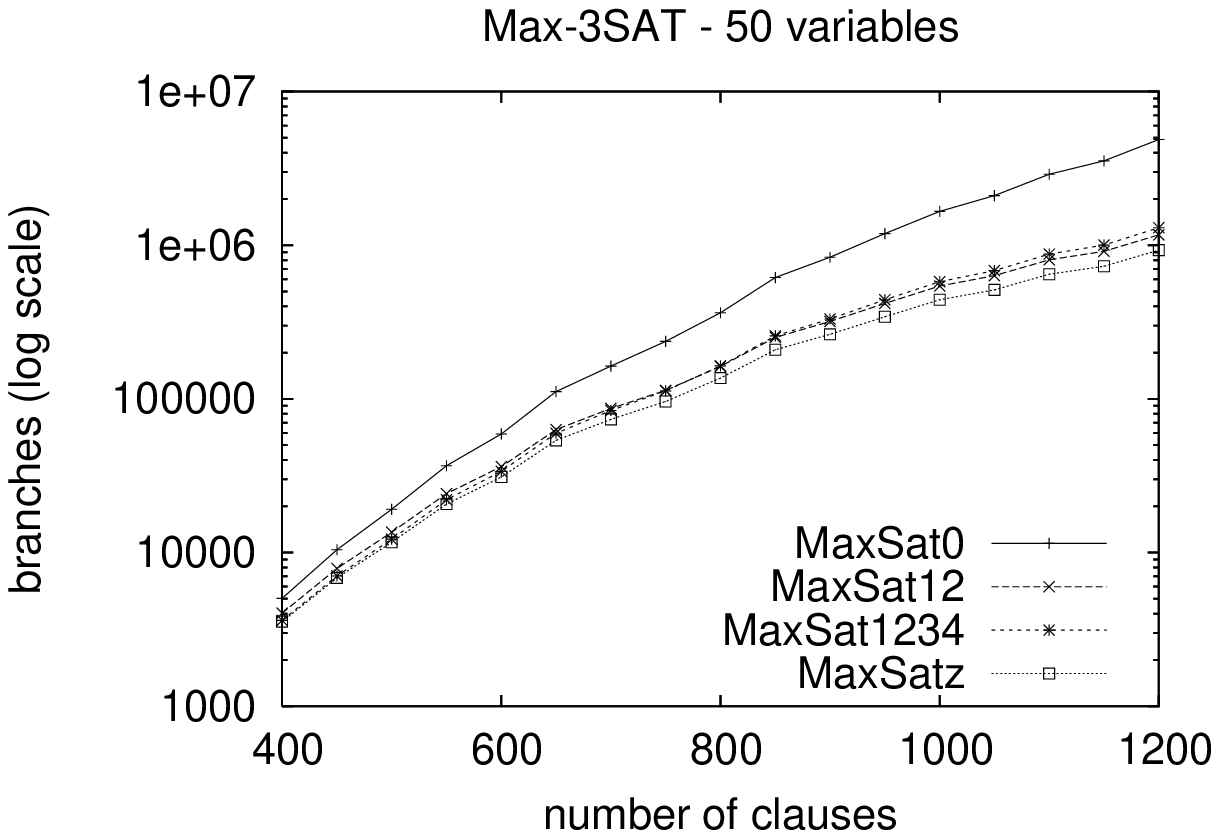} \\
\includegraphics[scale=.5]{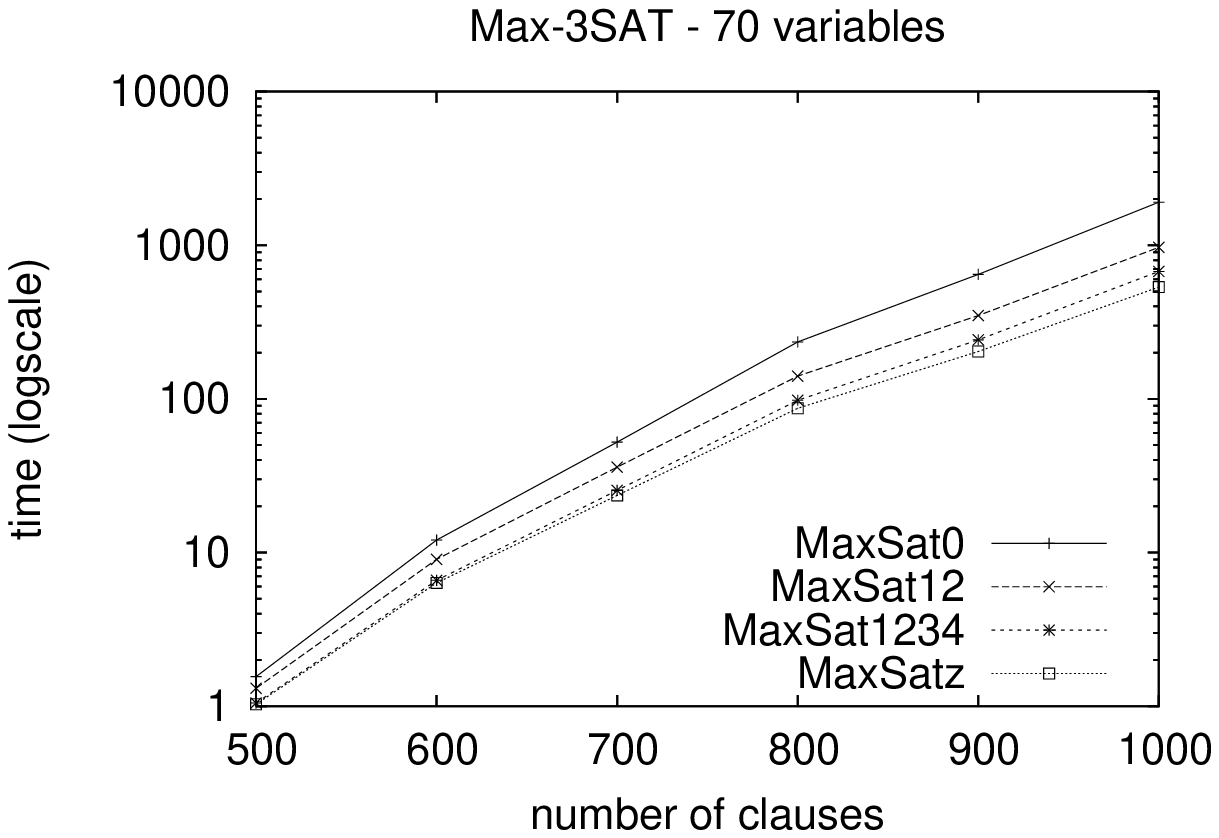} & \includegraphics[scale=.5]{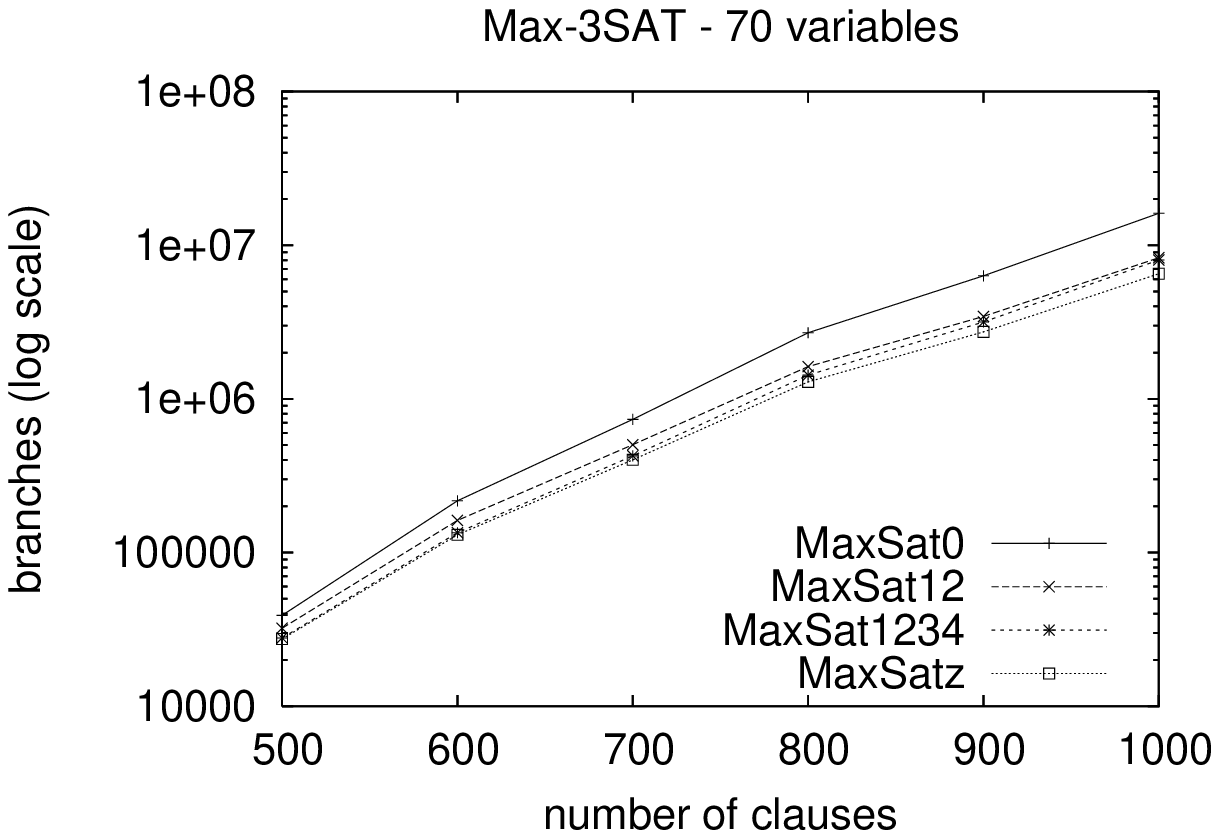} \\
\end{tabular}
\caption{\small Comparison among MaxSat12, MaxSat1234 and MaxSatz on random Max-3SAT instances.}\label{rule3sat}
\end{figure}

Although the rules do not involve ternary clauses, they are also powerful for Max-3SAT. Similarly to Max-2SAT, Rule~\ref{linear2} and Rule~\ref{linear3} slightly improve the lower bound when there 
are relatively few clauses, but do not improve the lower bound when the number 
of clauses increases. They improve the time performance thanks to the incremental lower bound computation
they allowed.
The gain increases as the number of clauses increases. For example, for problems with 70 variables, when the number of clauses is 600, MaxSat1234 is 36\% faster than MaxSat12 and, when the number of clauses is 1000, the gain is 44\%.
Rule~\ref{nonlinear1} and Rule~\ref{nonlinear2} improve both the lower bound and the time performance of MaxSatz. The gain increases as the number of clauses increases. 

In the third experiment we considered the Max-Cut problem for graphs with 50 vertices and a number of edges ranging
from 200 to 800. Figure~\ref{max-cut1} shows the results of comparing the inference rules on Max-Cut instances. 
We observe that the rules allow us to solve the instances much faster. Similarly to random Max-2SAT, Rule \ref{linear2} 
and Rule \ref{linear3} do not improve the lower bound when there are many clauses, but improve the 
time performance due to the incremental lower bound computation they allowed. Rule \ref{nonlinear1} and Rule \ref{nonlinear2} are more powerful than Rule \ref{linear2} 
and Rule \ref{linear3} for these instances, which only contain binary clauses but have some structure. In addition, the reduction of the tree size due to Rule \ref{nonlinear1} and Rule \ref{nonlinear2} contributes to the time performance of MaxSatz more than the incrementality of the lower bound computation, as for random Max-2SAT. For example, the search tree of MaxSatz 
for instances with 800 edges is 40 times smaller than the search tree of MaxSat1234, and MaxSatz is 47 times faster.

\vspace{-0.5cm}
\begin{figure}[H]
\begin{center}
\begin{tabular}{cc}
\includegraphics[scale=.5]{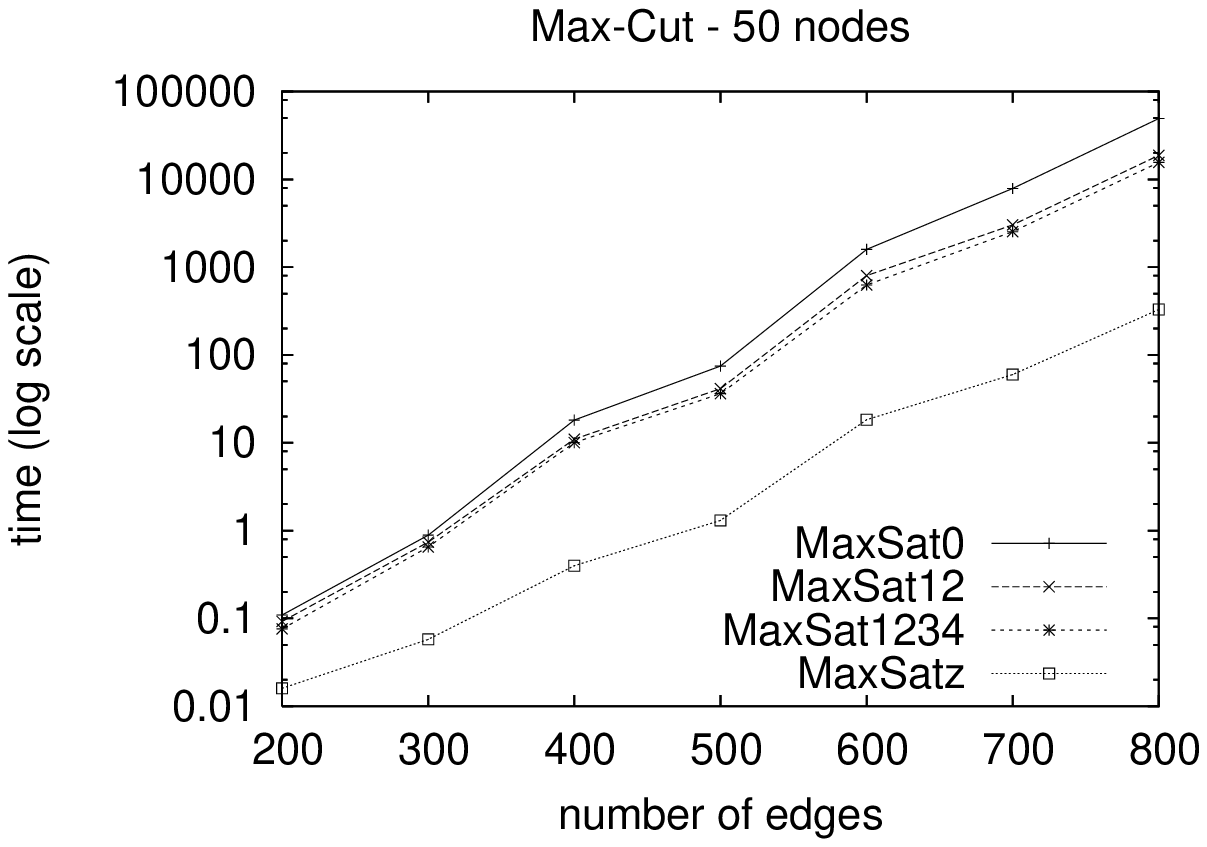} & 
\includegraphics[scale=.5]{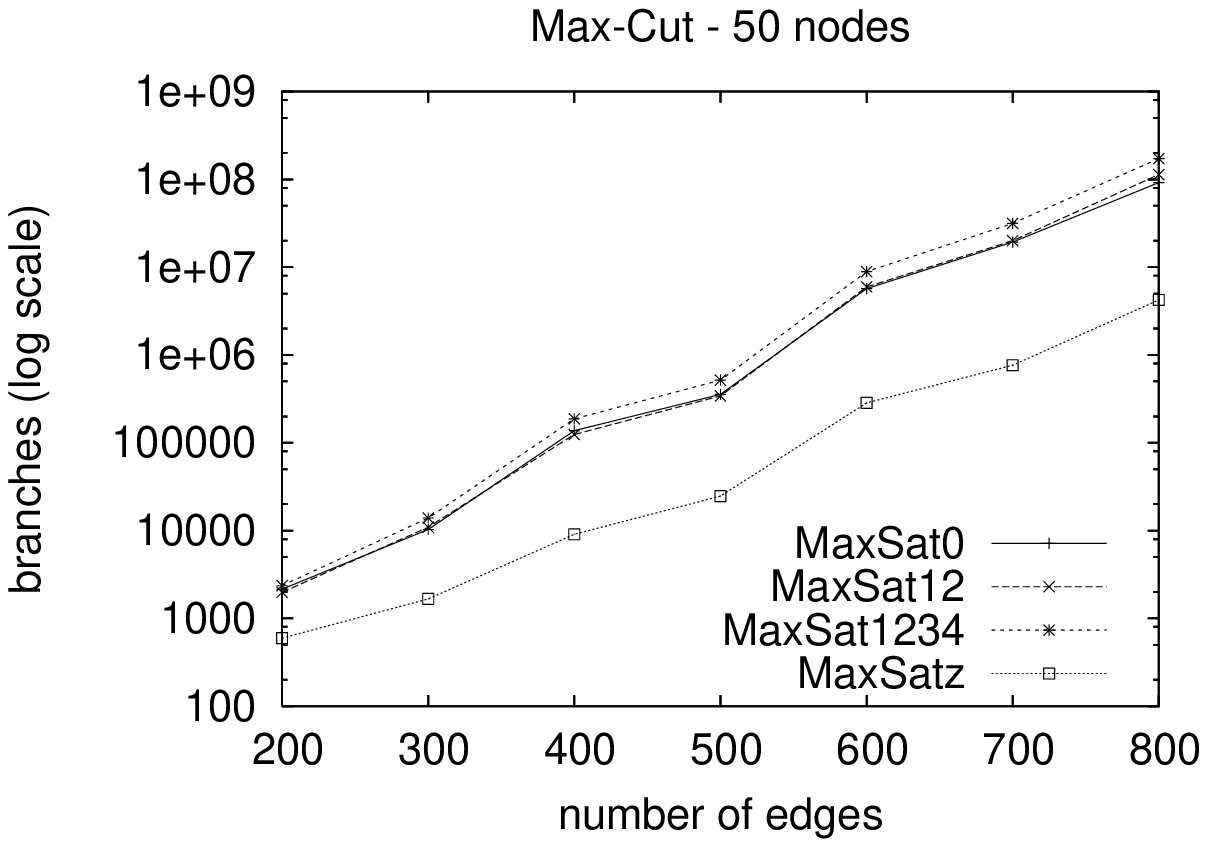}   \\
\end{tabular}
\end{center} 
\caption{\small Experimental results for Max-Cut}\label{max-cut1}
\end{figure}

In the fourth experiment we considered graph 3-coloring instances with 24 and 60 vertices, and with density of edges ranging
from 20\% to 90\%. Figure~\ref{coloring-cml} shows the results of comparing the inference rules on graph 3-coloring instances. We observe that Rule \ref{resolution} and Rule \ref{linear1} are not useful for these instances; the tree size of MaxSat0 and MaxSat12 is almost the same, and MaxSat12 is slower than MaxSat0. On the contrary, other rules are very useful for these instances, especially because they allow to reduce the search tree size by deriving better lower bounds.

\vspace{-0.5cm}
\begin{figure}[H]
\begin{center}
\begin{tabular}{cc}
\includegraphics[scale=.5]{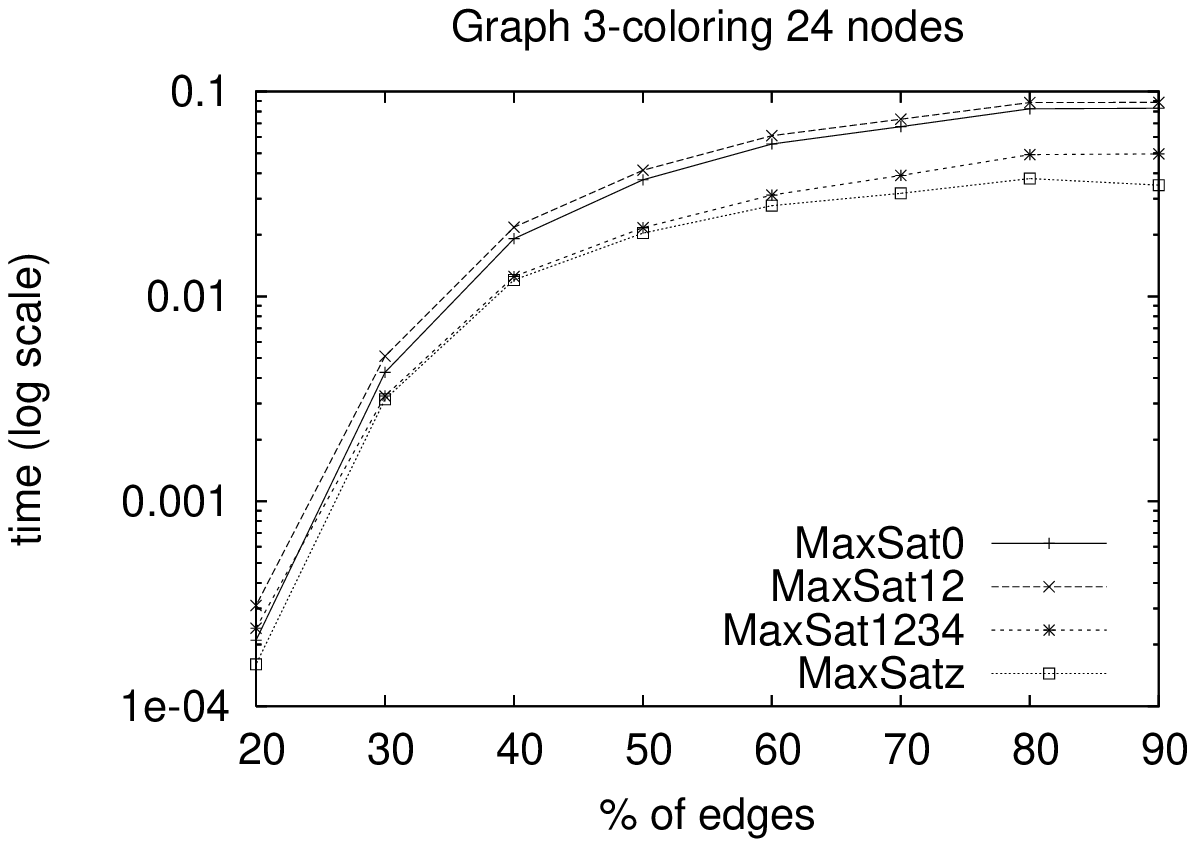} & \includegraphics[scale=.5]{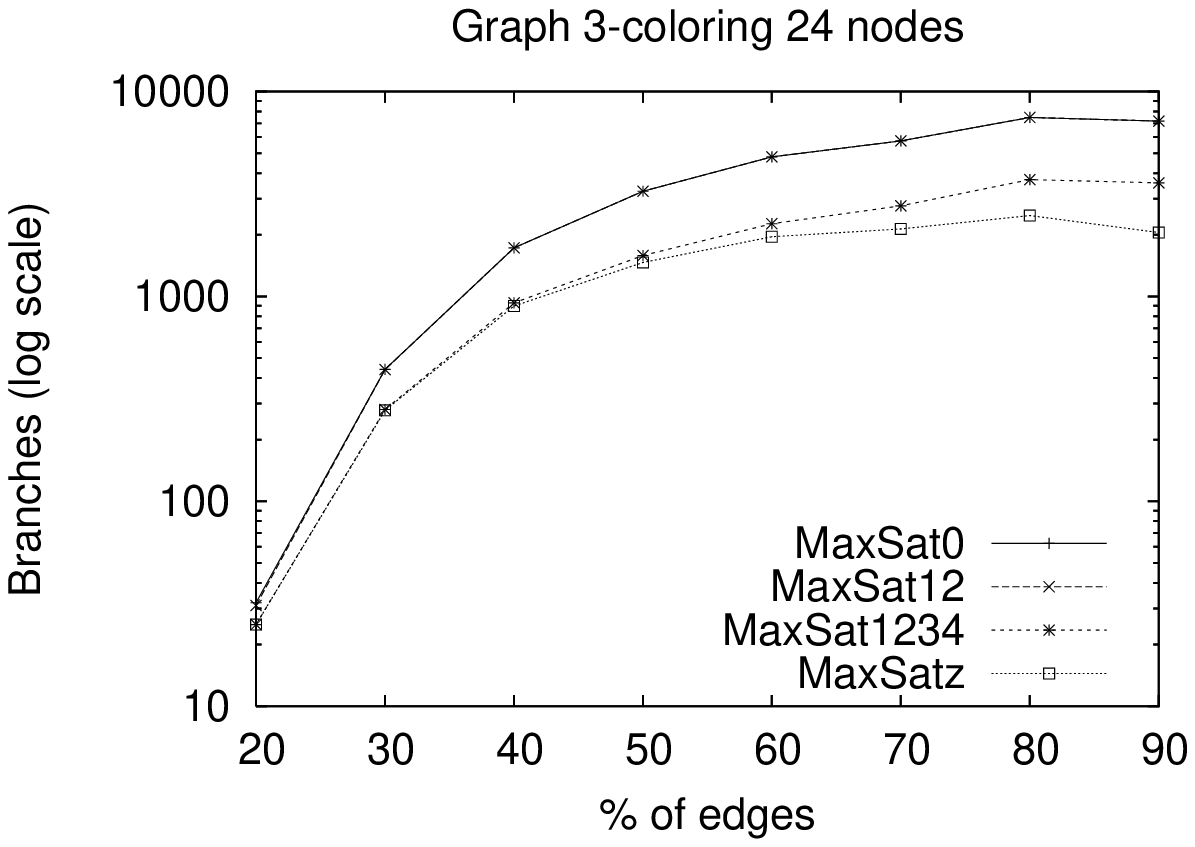}   \\
\includegraphics[scale=.5]{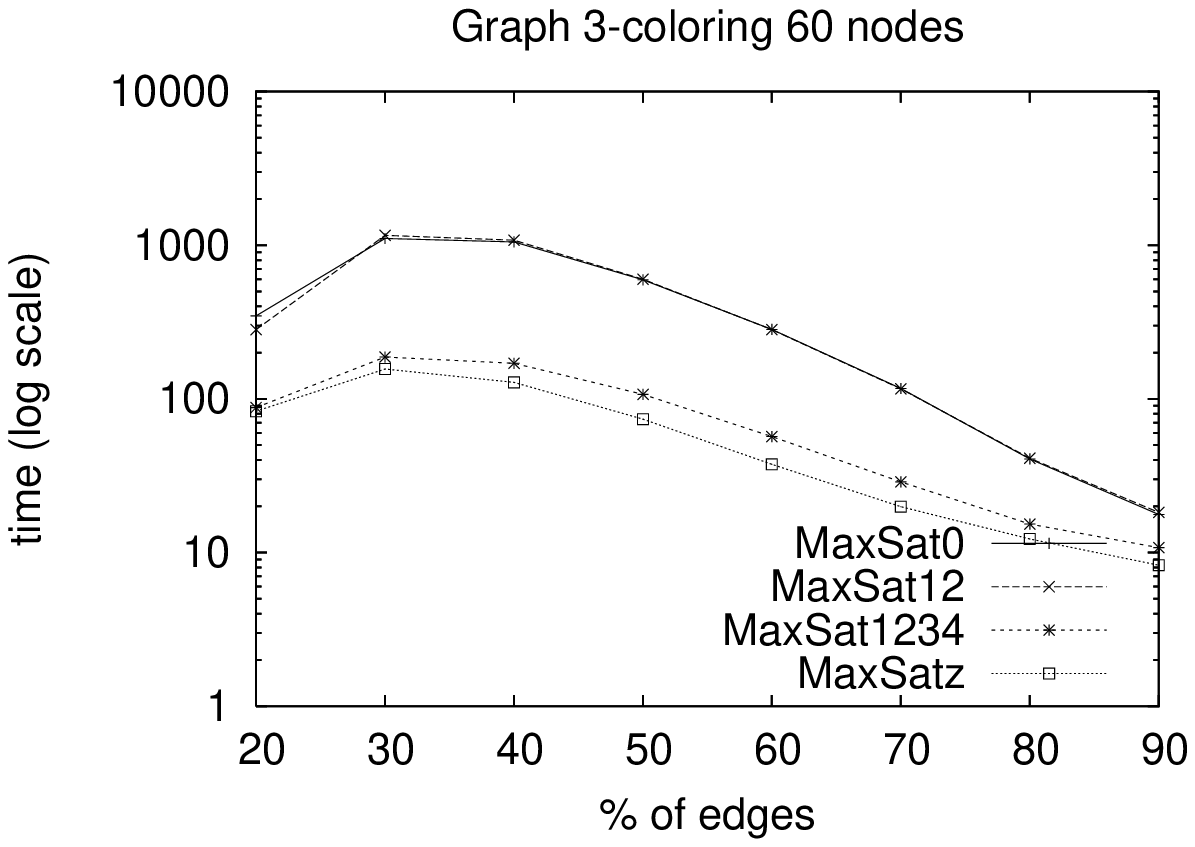} & \includegraphics[scale=.5]{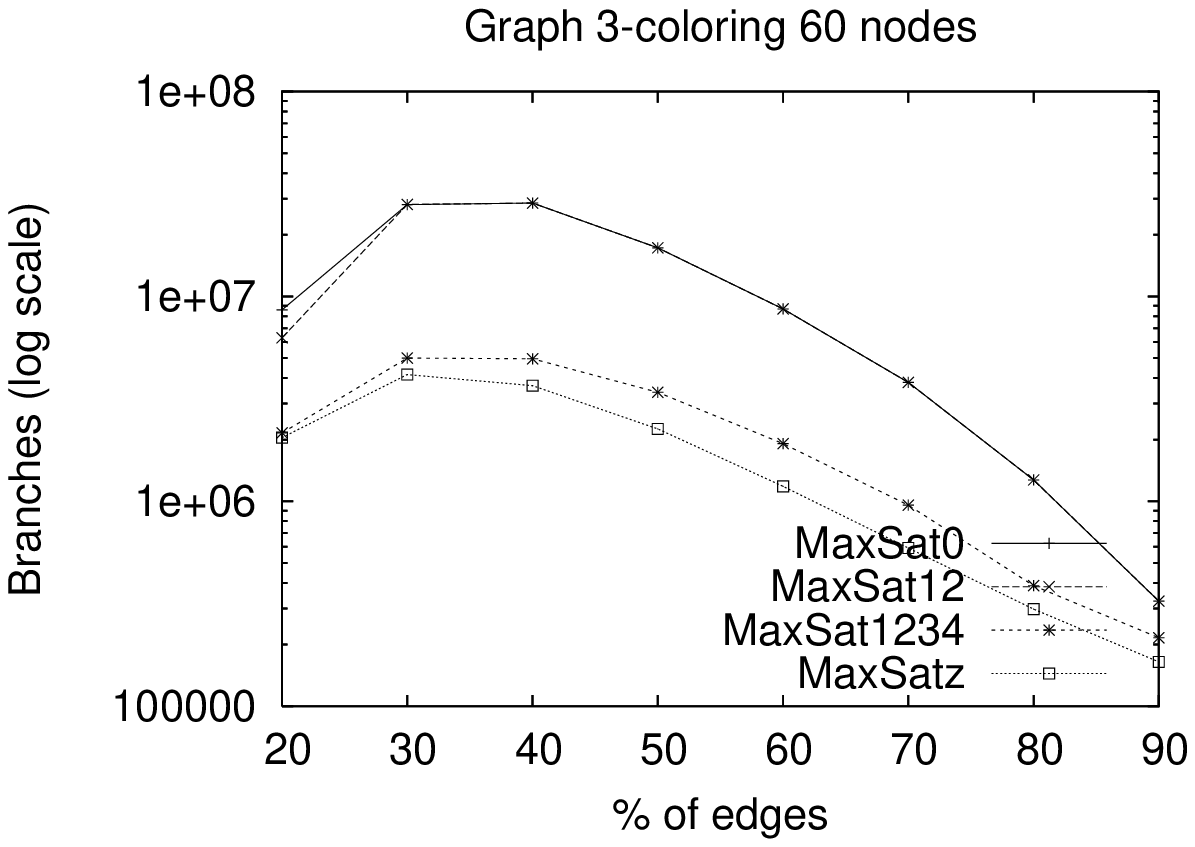}   \\
\end{tabular}
\end{center} 
\caption{\small Experimental results for Graph 3-Coloring}\label{coloring-cml}
\end{figure}

Note that Rule \ref{linear2} and Rule \ref{linear3} have more impact than Rule \ref{nonlinear1} and Rule \ref{nonlinear2} on reducing the cost of solving the instances. 
This is probably due to the fact that two unit clauses are needed to detect a contradiction, so that
Rule~\ref{linear2} and Rule~\ref{linear3} are applied many more times. Also note that the instances with 60 vertices become
easier to solve when the density of the graph is high.

In the fifth experiment, we compared different inference rules on the benchmarks submitted to the 
Max-SAT Evaluation 2006. Solvers ran in the same conditions as in the evaluation. In Table~\ref{tab:rule-evaluation},
the first column is the name of the benchmark set, the second column is the number of instances in the set, 
and the rest of columns display the average time, in seconds, needed by each solver to solve an instance
(the number of solved instances in brackets). The maximum time allowed to solve an instance was 30 minutes.

In is clear that MaxSat12 is better than MaxSat0, MaxSat1234 is better than MaxSat12, and MaxSatz is better than MaxSat1234. 
For example, MaxSatz solves three MAXCUT johnson instances within the time
limit, while the other solvers only solve two instances. The average time for MaxSatz to solve one of these three 
instances is 44.46 seconds, the third instance needing more time to be solved than the other two instances.

\begin{table} [H]
\scriptsize
\begin{tabular}{lccccc}
\multicolumn{1}{c}{Set Name} & \#Instances   & \texttt{MaxSat0}    & \texttt{MaxSat12} & \texttt{MaxSat1234} & \texttt{MaxSatz} \\
\hline
MAXCUT brock        &  12             &            471.01(10)       &           277.12(12)   &    225.11(12)   &     \bf 14.01(12) \\
MAXCUT c-fat          &   7       	   &  	      1.92 (5)	   &  	      3.11 (5)         &   2.84 (5)          &      \bf 0.07(5) \\
MAXCUT hamming &   6       	   &  	      39.42(2)	   &  	      29.43(2)        &     29.48(2)       &   \bf 171.30(3) \\
MAXCUT johnson   &   4       	   &  	      14.91(2)	   &  	      8.57 (2)         &   7.21 (2)          &     \bf 44.46(3) \\
MAXCUT keller        &   2       	   &  	      512.66(2)	   &  	      213.64(2)      &  163.26(2)       &       \bf 6.82(2) \\
MAXCUT p hat         &  12      	   &  	      72.16(9)	   &  	      286.09(12)   &   226.24(12)    &     \bf 16.81(12) \\
MAXCUT san           &  11      	   &  	      801.95(7)	   &  	      305.75(7)      &   245.70(7)      &    \bf 258.65(11) \\
MAXCUT sanr          &   4       	   &  	      323.67(3)	   &  	      134.74(3)      &     107.76(3)    &       \bf  71.00(4) \\
MAXCUT max cut    &  40      	   &  	      610.28(35)     &  	      481.48(40)    &       450.05(40)&        \bf 7.18(40) \\
MAXCUT SPINGLASS &   5         &  	      0.22 (2)	   &  	      0.19 (2)          &       0.15 (2)      &     \bf 0.14(2) \\
MAXONE                  &  45      	   &  	      \bf 0.03 (45) &  \bf 0.03 (45)    & \bf 0.03 (45)   &    \bf  0.03(45) \\
RAMSEY                  &  48      	   &  	      8.93 (34)	   &  	      8.42 (34)        &       7.80 (34)   &         \bf 7.78(34) \\
MAX2SAT 100VARS   &  50         &  	      95.01(50)	   &  	      11.30(50)       &      8.14 (50)    &    \bf  1.25(50) \\
MAX2SAT 140VARS   &  50         &  	      153.28(49)     &  	      51.76(50)       &        34.14(50) &       \bf  6.94(50) \\
MAX2SAT 60VARS &  50      	   &  	      1.35 (50)	   &  	      0.08 (50)         &        0.06 (50)  &       \bf 0.02(50) \\
MAX2SAT DISCARDED& 180     &  	      126.98(162)   &  	      71.85(173)     &       68.97(175)&      \bf 22.72(180) \\
MAX3SAT 40VARS  &  50      	   &  	      11.52(50)	   &  	      3.33 (50)         &       2.52 (50)   &       \bf 1.92(50) \\
MAX3SAT 60VARS  &  50      	   &  	      167.17(50)     &  	      72.72(50)        &        52.14(50) &      \bf 40.27(50) \\
\hline
\end{tabular}
\caption{Evaluation of the rules with benchmarks from the MAX-SAT Evaluation 2006.}\label{tab:rule-evaluation}
\end{table}

\subsection{Comparison of MaxSatz with Other Solvers}

In the first experiment, that we performed to compare MaxSatz with other state-of-the-art Max-SAT solvers,
we solved sets of 100 random Max-2SAT instances with 50, 100 and 150 variables; 
the number of clauses ranged from 400 to 4500 for 50 variables, from 400 to 1000 for 100 variables, and
from 300 to 650 for 150 variables. 
The results of solving such instances with BF, AGN, AMP, Lazy, toolbar, MaxSolver, UP and MaxSatz
are shown in Figure~\ref{2sat}. 
Along the horizontal axis is the number of clauses, and along the vertical axis
is the mean time, in seconds, needed to solve an instance of a set. When a solver needed too much time to solve the instances
at a point, it was stopped and the corresponding point is not shown in the figure. That is why for 50 variable instances, BF 
has only one point in the figure (for 400 clauses); and for 100 variable instances, BF and AMP also have only one point in the figure
(for 400 clauses).
The version of MaxSolver we used limits the number of clauses to 1000 in the instances to be solved. We ran it for instances
up to 1000 clauses.

We see dramatic differences
on performance between MaxSatz and the rest of solvers in Figure~\ref{2sat}. 
For the hardest instances, MaxSatz is up to two orders
of magnitude faster than the second best performing solvers (UP).  For those instances, MaxSatz needs 1 second to
solve an instance while solvers like MaxSolver and toolbar are not able to solve these instances after 10,000 seconds.

\begin{figure}
\begin{center}
\begin{tabular}{c}
\includegraphics[scale=.7]{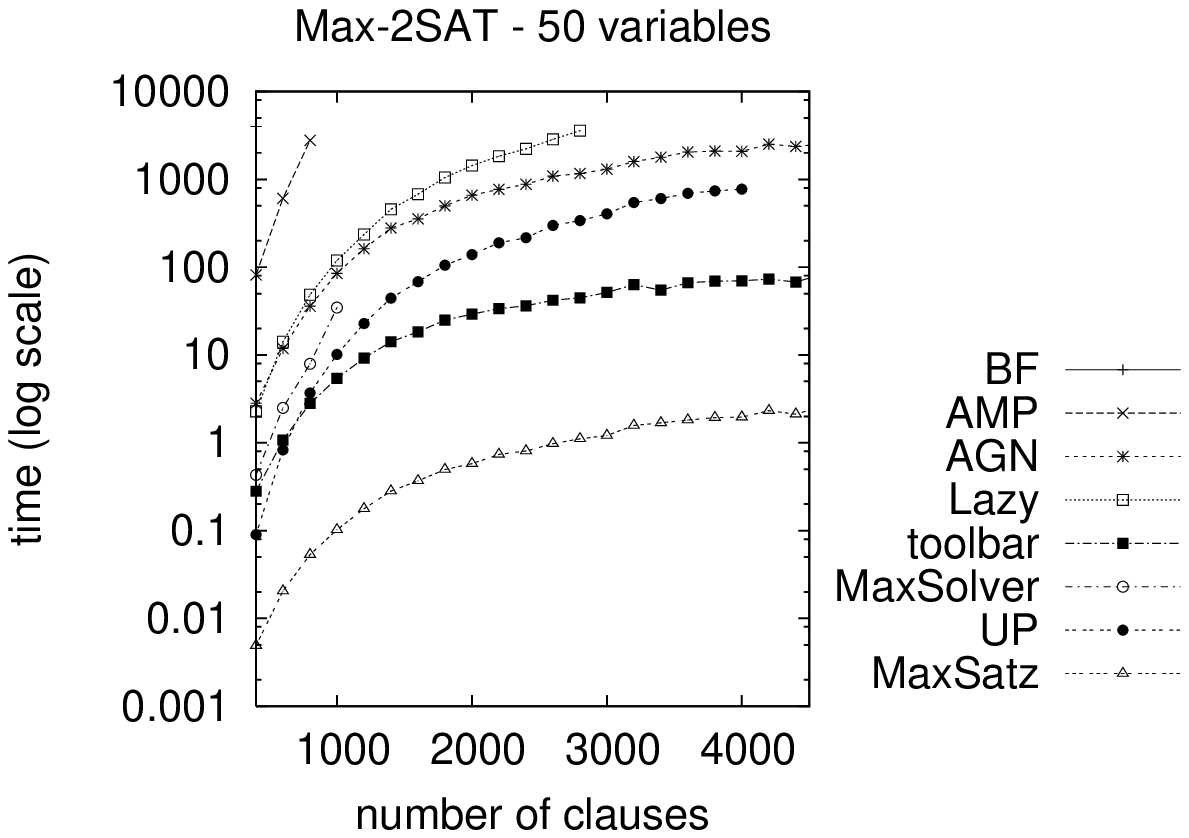} \\
\includegraphics[scale=.7]{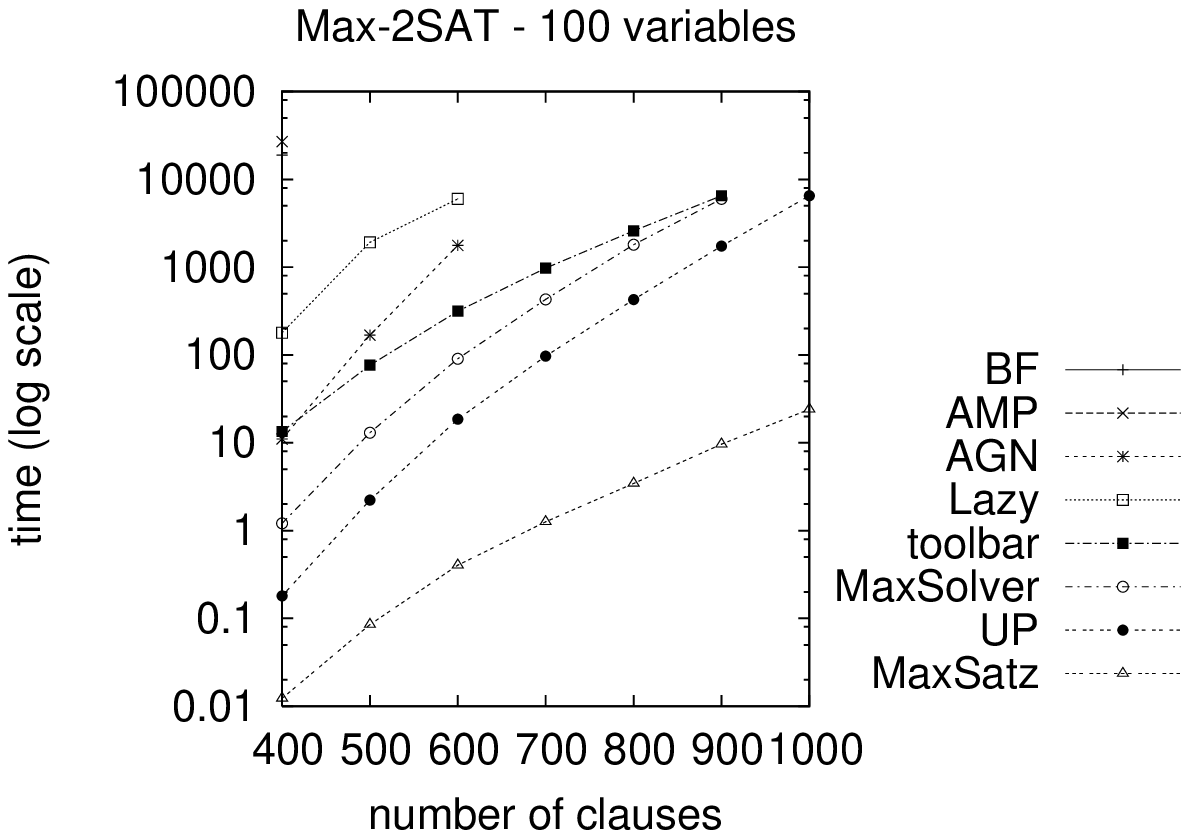} \\
\includegraphics[scale=.7]{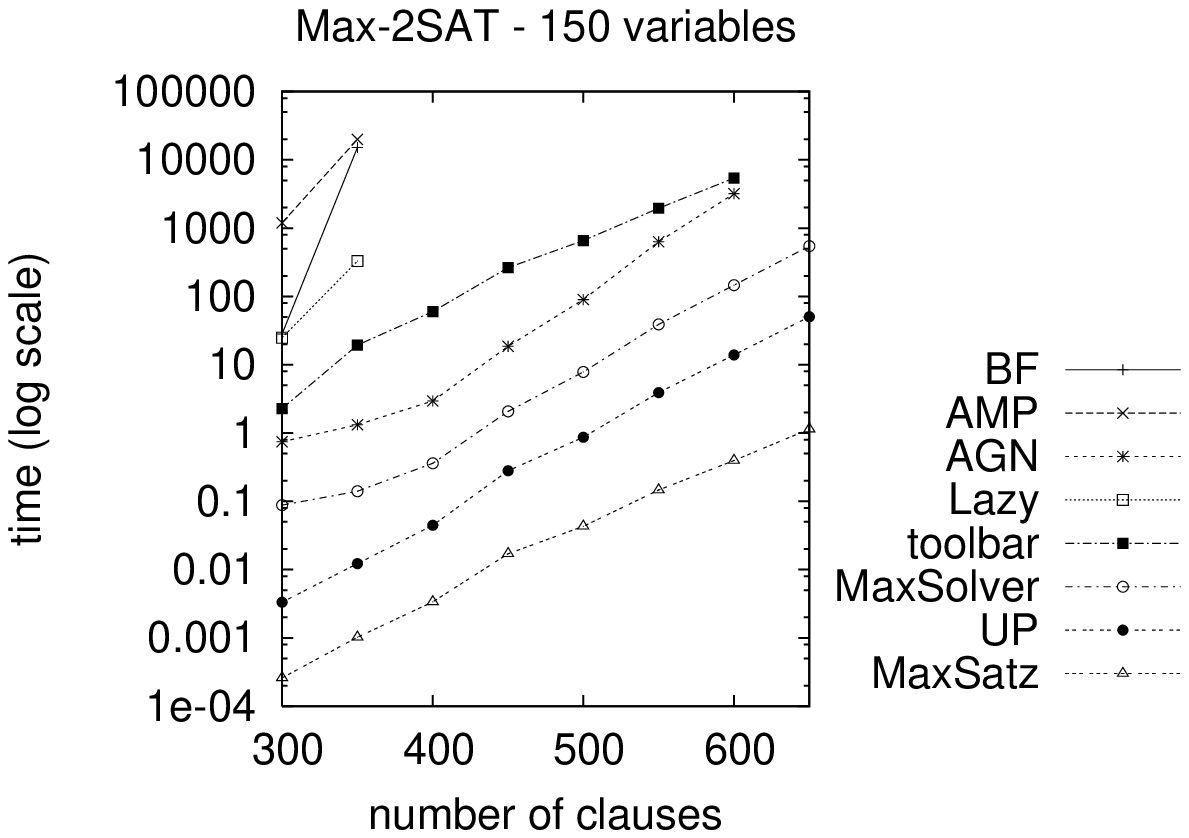}    \\
\end{tabular}
\end{center} 
\caption{Experimental results for 50-variable, 100-variable  and 150-variable random Max-2SAT  instances.}
\label{2sat}
\end{figure}

In the second experiment, we solved random Max-3SAT instances instead of random Max-2SAT instances.
The results obtained are shown in Figure~\ref{3sat}.
We did not consider AGN because it can only solve Max-2SAT instances.
We solved instances with 50, 70 and 100 variables; the number of clauses ranged from 500 to 1200 for 50 variables,
from 500 to 1000 for 70 variables,
and from 450 to 800 for 100 variables. For 70 variables, AMP has only one point in the figure (for 500 clauses) and BF is too slow. 
For 100 variables, we compared only the two best solvers.
Once again, we observe dramatic differences on the performance profile of MaxSatz and the rest of solvers.
Particularly remarkable are the differences between MaxSatz and toolbar (the second best performing solver on Max-3SAT),
where we see that MaxSatz is up to 1,000 times faster than toolbar on the hardest instances.

\begin{figure}
\begin{center}
\begin{tabular}{c}
\includegraphics[scale=.7]{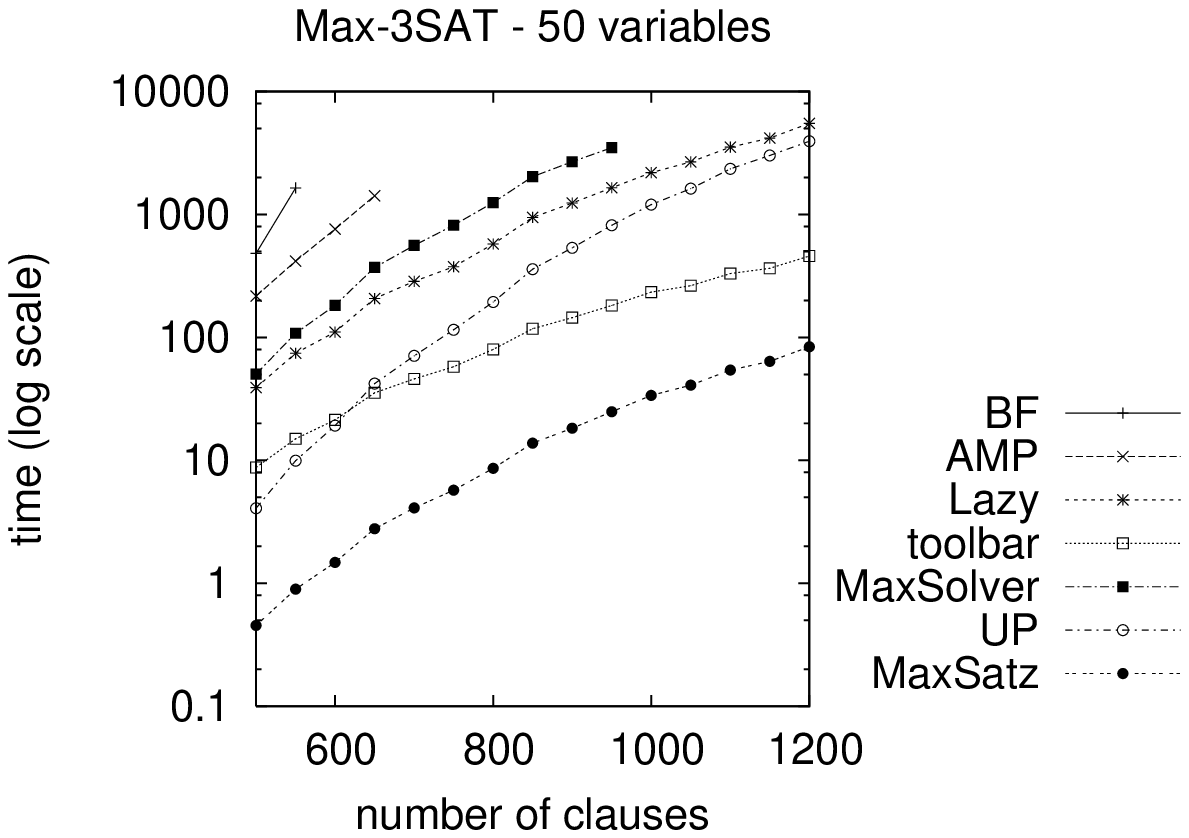} \\
\includegraphics[scale=.7]{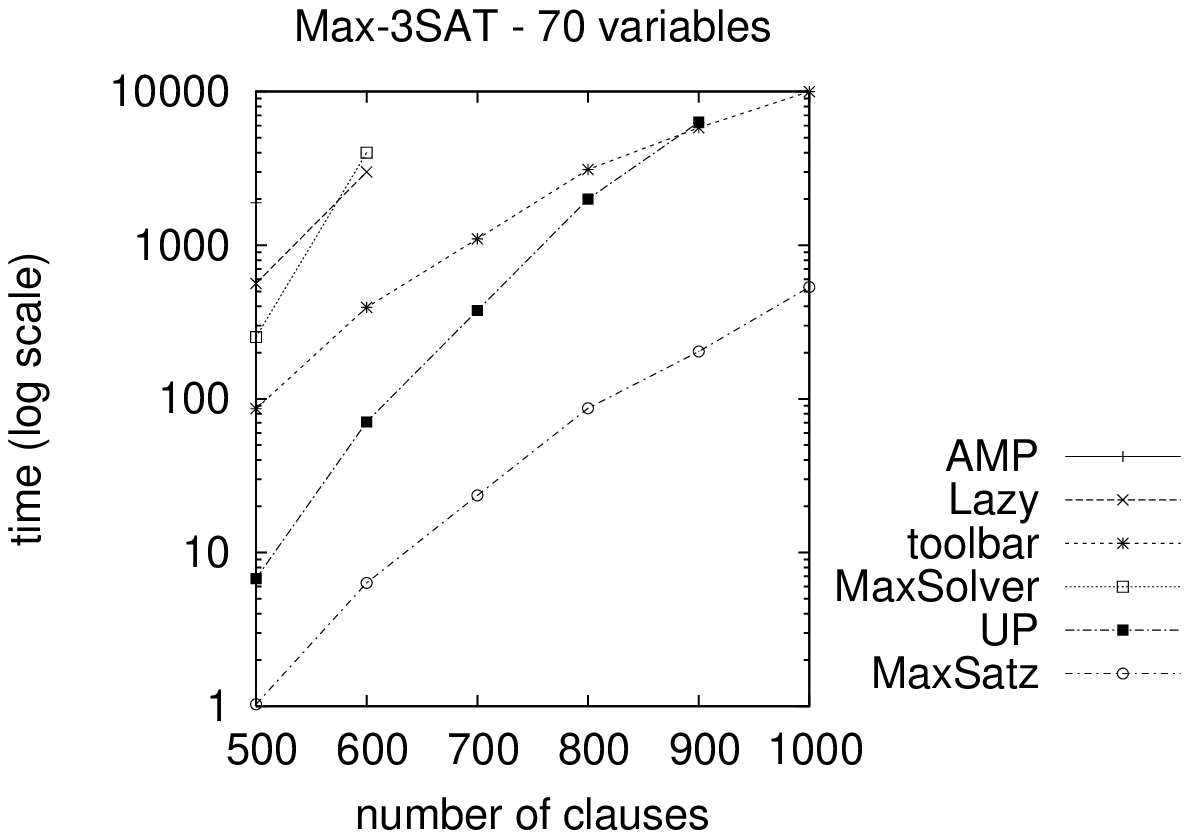} \\
\includegraphics[scale=.7]{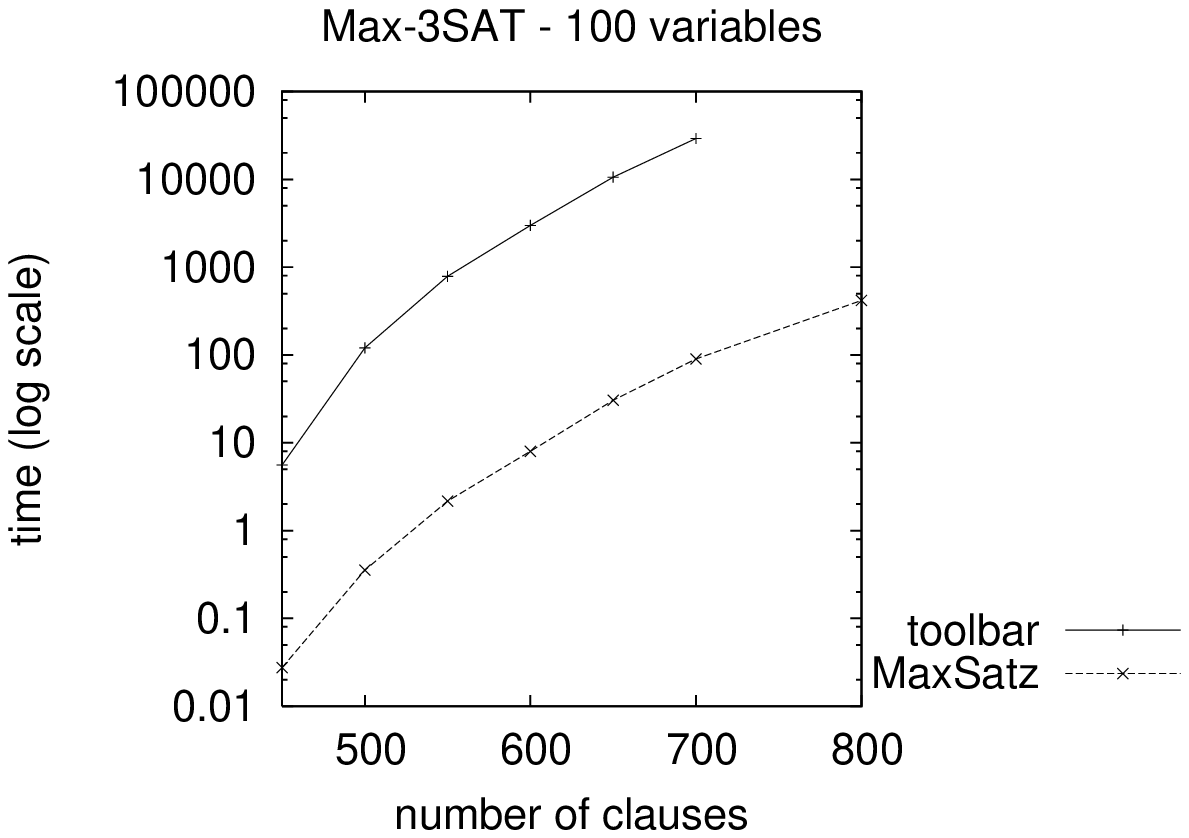}    \\
\end{tabular}
\end{center} 
\caption{Experimental results for 50-variable, 70-variable  and 100-variable random Max-3SAT instances.}\label{3sat}
\end{figure}

In the third experiment, we considered the Max-Cut problem of graphs with 50 vertices and a number of edges ranging
from 200 to 700. Figure~\ref{max-cut} shows the results obtained. BF has only one point in the figure (for 200 edges). 
MaxSolver solved instances up to 500 edges (1000 clauses).
We observe that MaxSatz is superior to the rest of solvers.

In the fourth experiment, we considered the 3-coloring problem of graphs with 24 and 60 vertices,
and a density of edges ranging
from 20\% to 90\%.  AGN was not considered because it can only solve Max-2SAT 
instances.
For 60 vertices, we only compared the three best solvers, of which MaxSolver is a different version 
not limiting the number of clauses of the instance to be solved.
Figure \ref{coloring} shows the comparative results for different solvers. MaxSatz is the best performing solver, 
and UP and MaxSolver are substantially better than the
rest of solvers.

\vspace{-0.5cm}
\begin{figure}[H]
\begin{center}
\begin{tabular}{cc}
\includegraphics[scale=.63]{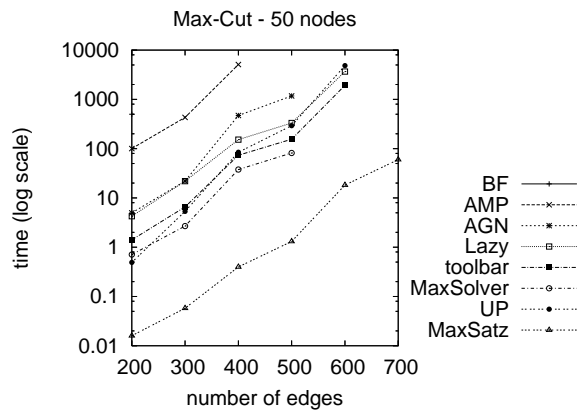} &    \\
\end{tabular}
\end{center} 
\vspace{-0.5cm}
\caption{\small Experimental results for Max-Cut}\label{max-cut}
\end{figure}
\vspace{-0.5cm}

\vspace{-0.99cm}
\begin{figure}[H]
\begin{center}
\begin{tabular}{c}
\includegraphics[scale=.63]{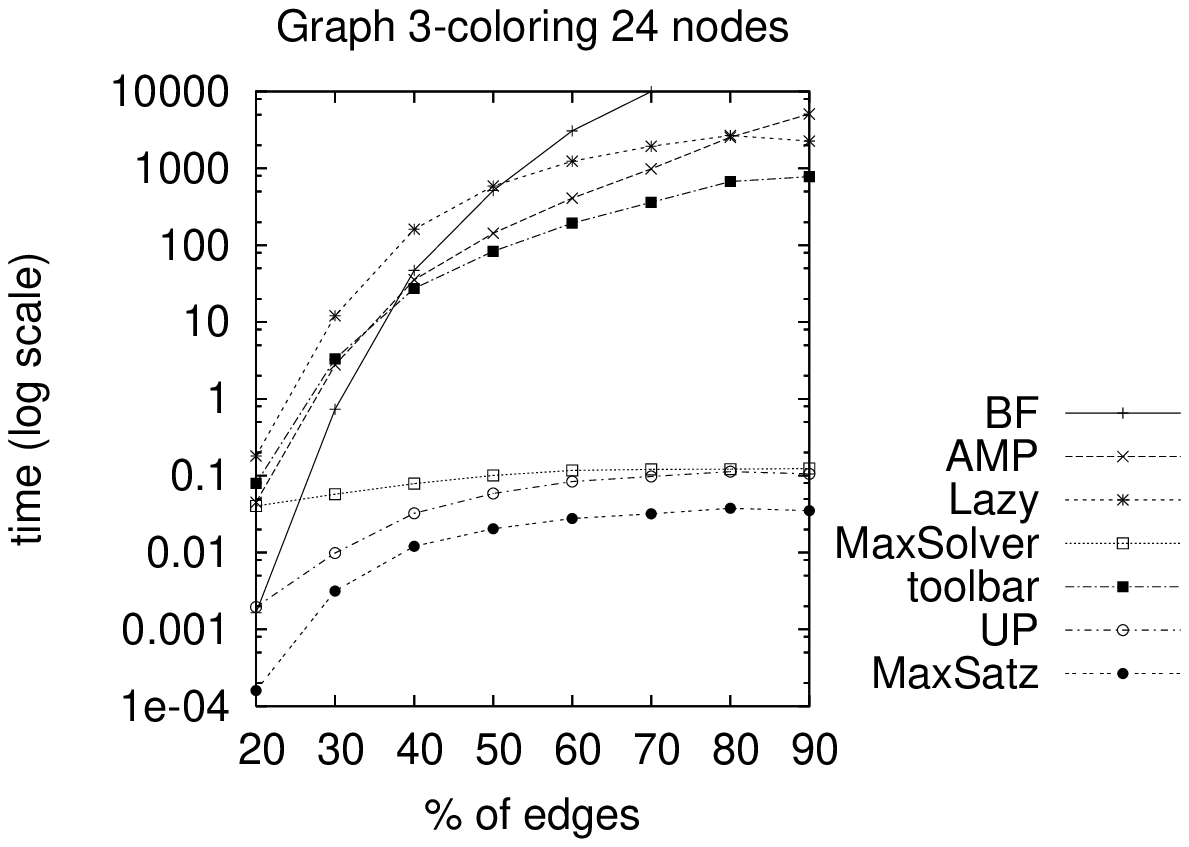} \\
\vspace{-0.2cm}
\includegraphics[scale=.63]{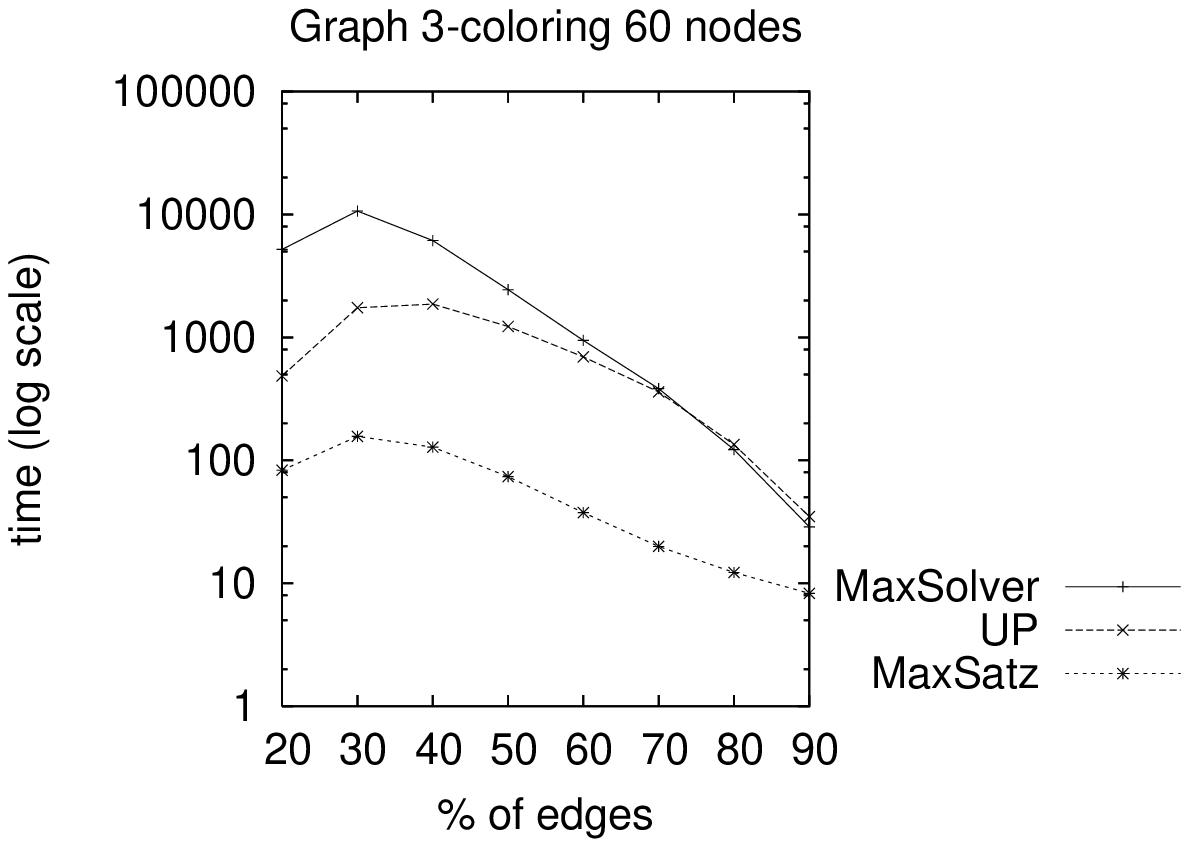} \\
\end{tabular}
\end{center} 
\vspace{-0.5cm}
\caption{\small Experimental results for Graph 3-Coloring}\label{coloring}
\end{figure}

In the fifth experiment, we compared the Max-SAT solvers on the benchmarks submitted to the Max-SAT Evaluation 2006. Solvers ran
in the same conditions as in the evaluation. In Table~\ref{tab:evaluation},
the first column is the name of the benchmark set, the second column is the number of instances of the set,
and the rest of columns display the average time, in seconds, needed by each solver to solve an instance within 
a time limit of 30 minutes (the number of instances solved within the time limit in 
brackets). A dash means that the corresponding solver cannot solve the set of instances.
It is clear that MaxSatz is the best performing solver for all the sets.

\begin{sidewaystable}
\scriptsize
\begin{tabular}{lcccccccccc}
\multicolumn{1}{c}{Set Name} & \#Instances &  \texttt{BF}   & \texttt{AMP}  & \texttt{AGN} & \texttt{toolbar} & \texttt{Lazy} & \texttt{MaxSolver} & \texttt{UP}         & \texttt{MaxSatz} \\
\hline
MAXCUT brock                          &   12               &       (0)          &   545.81(1)    &856.65(8)     & 470.23(12)     & 159.28
(12)   &      380.09(2)                 &      629.85(9)    &  \bf 14.01(12)    \\
MAXCUT c-fat                            &    7                 & 6.06 (1)       &   1.95 (3)       &32.70(5)        & 42.84(5)          & 13.23 (4)     & 41.58(3)                &   7.19 (5)          & \bf 0.07(5)      \\
MAXCUT hamming                   &    6                 &      (0)           &   636.04(1)    &159.99(1)     & 145.84(2)       & 265.35 (2)   &      (0)                      &       294.89(2)   & \bf 171.30(3)     \\
MAXCUT johnson                     &    4                 &      (0)           &   394.17(2)    &92.90(2)       & 11.07(2)          & 13.50 (2)     & 1.34 (1)                   &      29.42(2)      & \bf 44.46(3)      \\
MAXCUT keller                          &    2                 &      (0)           &   197.15(1)    &39.36(1)       & 255.39(2)        & 348.75 (2)  &      (0)                      &      615.54(2)    & \bf 6.82 (2)      \\
MAXCUT DIMACS p hat          &12                   & 605.44(2)   &   107.79(8)    &16.11(8)       & 235.60(11)      & 259.33 (10)& 14.00(8)                 &      140.23(9)    & \bf 16.81(12)     \\
MAXCUT san                             &   11                &      (0)           &   563.19(1)   &72.35(2)        & 568.09(7)        & 956.54 (5)  & 283.34(2)               &      812.47(5)    & \bf 258.65(11)    \\
MAXCUT sanr                            &    4                 &      (0)           &   428.18(1)   &909.32(3)      & 234.89(3)        & 410.53 (3)  & 138.32(1)              &      538.10(3)    & \bf 71.00(4)      \\
MAXCUT max cut                      &   40                &      (0)           &         (0)           &1742.79(3)   & 736.34(18)     & 1027.21 (7)& (0)                          &      623.03(13)   & \bf 7.18(40)   \\
MAXCUT SPINGLASS             &    5                  & 0.21 (1)      &   0.13 (1)       &   12.70(2)      & 5.72 (2)            & 0.05 (1)       & 570.68(2)              &   0.86 (2)           & \bf 0.14(2)      \\
MAXONE                                    &   45                 & 0.02 (21)    &   \bf 0.03 (45)&     -                & 35.35(44)         & 278.58 (26)& 0.06 (45)               &      0.31 (45)      & \bf 0.03 (45)     \\
RAMSEY ram k                         &   48                 & 8.53 (30)    &   38.44(30)    &     -                 & 4.14(27)           & 10.48 (25)  & 0.20 (20)                &      19.65(25)    & \bf 7.78 (34)     \\
MAX2SAT 100VARS               &   50                 & 0.14 (10)    &   143.23(11)  &185.69(30)   & 244.05(34)       & 273.44 (22)& 532.47(16)             &      192.34(48) & \bf 1.25 (50) \\
MAX2SAT 140VARS               &   50                 & 0.08 (10)    &   91.93(12)    &126.34(28)   & 262.30(26)       & 217.12 (17)& 168.42(18)             &      75.57(39)    & \bf 6.94 (50)  \\
MAX2SAT 60VARS                 &   50                  & 1.92 (3)      &   514.02(44) &6.34 (50)       & 2.01 (50)           & 26.44 (50)  & 81.82(50)                &      0.94 (50)     & \bf 0.02 (50)     \\
MAX2SAT DISCARDED         &  180                & 357.65(28)&   439.54(76) &99.70(108)    & 178.23(116)     & 85.08 (87)  & 308.58(73)             &      166.29(149)& \bf 22.72(180) \\
MAX3SAT 40VARS                 &   50                  & 170.49(22)&   202.18(50) &     -                  & 10.19 (50)        & 69.72 (50)  & 66.34(49)               &      60.50(50)     & \bf 1.92(50)     \\
MAX3SAT 60VARS                 &   50                  & 4.07 (16)    &   168.00(25) &     -                  & 361.95(43)      & 242.40 (28) & 139.03(22)            &      166.76(37)   & \bf 40.27(50)     \\
\hline
\end{tabular}
\caption{Experimental results for benchmarks from the MAX-SAT Evaluation 2006.}\label{tab:evaluation}
\end{sidewaystable}

\section{Related Work} \label{related-work}

The simplest  method to compute a lower bound consists of just counting the number of clauses
unsatisfied by the current partial assignment \cite{BF99}. One step forward is to incorporate an underestimation of the number of
clauses that will become unsatisfied if the current partial assignment
is extended to a complete assignment. The most basic method was defined by Wallace and Freuder~\citeyear{WF96}:

$$\texttt{LB}(\phi) = \#emptyClauses(\phi) + \sum_{x \mbox{ occurs in } \phi} min (ic(x),ic(\bar x))$$ 

\noindent where $\phi$ is the CNF formula associated with the current partial assignment, and
$ic(x)$ ($ic(\bar x)$) ---inconsistency count of $x$ ($\bar x$)--- is the number of unit clauses of $\phi$ that contain $\bar x$ ($x$).

The underestimation of the lower bound can be improved by applying to binary clauses
the Directional Arc Consistency (DAC) count defined by Wallace~\citeyear{Wallace95} for Max-CSP.
The DAC count of a value of the variable $x$ in $\phi$ is the number of variables which are inconsistent 
with that value of $x$. For example, if $\phi$
contains clauses $x\vee y$, $x \vee \bar y$, and $\bar x \vee y$, the value 0 of $x$
is inconsistent with $y$.
Note that value 0 of $y$ is also
inconsistent with $x$. These two inconsistencies are not disjoint and cannot be summed. Wallace defined a
direction from $x$ to $y$, so that only the inconsistency for value 0 of $x$ is counted.
After defining a direction between every pair of variables sharing a constraint,
one computes the DAC count for all values of $x$ by checking all variables to which a direction from
$x$ is defined. The underestimation considering 
the DAC count of Wallace is as follows:

$$\sum_{x \mbox{ occurs in } \phi} (min (ic(x),ic(\bar x))+min(dac(x), dac(\bar x))$$

\noindent where $dac(x)$ ($dac(\bar x)$) is the DAC count of the value 1(0) of $x$.
\citeauthor{Wallace95} statically defined all directions,
so that $dac(x)$ and $dac(\bar x)$ can be computed
in a preprocessing step for every $x$ and do not need to be recomputed during search. 
This is improved by Larrosa, Meseguer and Schiex~\citeyear{LMS99} by introducing reversible DAC, which searches for better directions to obtain a better
lower bound at every node of the search tree. An improvement of DAC counts is the additional incorporation of inconsistencies
contributed by disjoint subsets of variables, based on particular variable partitions~\cite{LM02a}.

Inconsistent and DAC counts deal with unit and binary clauses. Lower bounds dealing with longer
clauses include star rule~\cite{SZ04,AMP04} and UP~\cite{LMP05}.

In the star rule, the underestimation of the lower bound is the number of disjoint
inconsistent subformulas of the form $\{l_1, \ldots, l_k, \bar l_1 \vee \cdots \vee \bar l_k\}$.
The star rule, when $k=1$, is equivalent to the inconsistency counts of Wallace and Freuder.

UP subsumes the inconsistent count method based on unit clauses and the star rule. Its effectiveness for
producing a good lower bound can be illustrated with the following example:
let $\phi$ be a CNF formula containing the clauses $x_1, \bar x_1 \vee x_2, \bar x_1 \vee x_3, \bar x_2 \vee \bar x_3 \vee x_4,
x_5, \bar x_5 \vee x_6, \bar x_5 \vee x_7, \bar x_6 \vee \bar x_7 \vee \bar x_4$.
UP easily detects that inconsistent subset with 8 clauses and 7 variables, in time linear in the size of the formula. 
Note that this subset is not detected by any of the lower bounds described above, except for the variable partition based
approach of Larrosa and Meseguer~\citeyear{LM02a} in the case that the 7 variables are in the same partition.

We mention two more lower bound computation methods. One is called LB4  and 
was defined by Shen and Zhang~\citeyear{SZ04}. It is similar to UP but restricted to Max-2SAT instances and using a static
variable ordering. Another is based on linear programming and was defined by Xing and Zhang~\citeyear{XZ05}.

Regin et al.~\citeyear{RPBP01} suggested to use arc consistency, instead of unit propagation, to detect disjoint inconsistent subsets of
constraints in weighted constraint networks. However, to the best of our knowledge, this idea has not been incorporated
in any lower bound computation method implemented by the Constraint Programming community.

A good lower bound computation method has a dramatic
impact on the  performance of a Max-SAT solver. Another approach to speed up a Max-SAT solver
consists of applying inference rules to transform a Max-SAT instance $\phi$ into an equivalent but simpler 
Max-SAT 
instance $\phi'$.  Inference rules that have proven to be useful in practice include: (i)~the pure literal rule~\cite{AMP03,XZ04,LMP05,ZSM03a}; 
(ii)~the dominating unit clause rule, first proposed by Niedermeier and Rossmanith~\citeyear{NR00},
 and later applied in several solvers~\cite{AMP04,XZ04,LMP05}; 
(iii)~the almost common clause rule, first proposed by Bansal and Raman~\citeyear{BR99} and restated as Rule~\ref{resolution} in this paper. 
The rule was extended to weighted Max-SAT by\citeauthor{AMP04}~\citeyear{AMP04};
was called neighborhood resolution by~\citeauthor{LH05a}~\citeyear{LH05a}; and used as a preprocessing technique by~\citeauthor{AMP04}~\citeyear{AMP04}, \citeauthor{SZ05}~\citeyear{SZ05}, and \citeauthor{LMP05}~\citeyear{LMP05};
(iv)~the complementary unit clause rule~\cite{NR00}, restated as Rule \ref{linear1} in this paper; and (v)~the coefficient-determining unit propagation rule~\cite{XZ05}
based on integer programming.

The inference rules presented in this paper simplify a Max-SAT formula $\phi$ and allow to improve the lower bound computation, since they all
transform a Max-SAT formula $\phi$ into a simpler and equivalent formula containing more empty clauses. 
Their soundness (i.e., the fact that they transform
a formula into an equivalent one)  can be proved in several ways, including (i) checking all possible variable assignments, (ii) using integer programming as done in Section \ref{inference-rules}, and 
(iii) using soft local consistency techniques defined for Weighted Constraint Networks (WCN);
Max-SAT can be defined as a subcase of WCN where variables are Boolean and only unit costs are used.

Soft local consistency techniques for WCN are based on two basic equivalence preserving transformations 
called $projection$ and $extension$
\cite{Schiex00,CS04}. Given a Max-SAT instance, projection replaces two binary clauses $x\vee y$ and $x \vee \bar y$ with
the unit clause $x$, which is Rule \ref{resolution} for $k$=2. Extension is the inverse operation of projection
and replaces a unit clause $x$ with two binary clauses $x\vee y$ and $x \vee \bar y$ for a selected variable $y$. If the projection operation 
is rather straightforward for a SAT or Max-SAT instance, the extension operation is very ingenious. To see this,
note that Rule \ref{linear2} can be proved or applied with an extension followed by a projection:

\begin{eqnarray} 
l_1, ~\bar{l}_1 \vee \bar{l}_2, ~l_2 & = & l_1 \vee l_2, l_1 \vee \bar{l}_2, ~\bar{l}_1 \vee \bar{l}_2, ~l_2 \nonumber\\ 
                                                              & = &  l_1 \vee l_2, \bar{l}_2, l_2  \nonumber\\ 
                                                              & = & l_1 \vee l_2, \Box \nonumber
\end{eqnarray}

Lemma \ref{lemma1} can also be proved using an extension followed by  a projection:

\begin{eqnarray} 
l_1,~ \bar{l}_1 \vee l_2 & = & l_1\vee \bar{l}_2, ~l_1\vee l_2, ~\bar{l}_1 \vee l_2 \nonumber\\ 
                                            & = & l_1\vee \bar{l}_2, l_2  \nonumber
\end{eqnarray}

The extension operation cannot be used in an unguided way because it may cancel a previous projection. One way
to guide its use is to define an ordering between variables to enforce directional arc consistency~\cite{Cooper03,CS04}. 
Directional arc consistency allows to concentrate weights on the same variables by shifting weights from earlier variables to later ones in a given ordering. For example
 if  $x_1 < x_2$ in a given variable ordering, one can extend 
unit clause $x_1$ to  $x_1 \vee x_2, x_1 \vee \bar{x}_2$, but cannot extend unit clause $x_2$ to $x_1\vee x_2, ~\bar{x}_1 \vee x_2$, allowing unit clauses to be concentrated on variable $x_2$. Nevertheless, how to define the variable ordering to efficiently exploit as much as possible the power of soft arc consistency techniques in the lower bound computation remains an open problem.

The approach with inference rules for Max-SAT presented in this paper does not need any predefined ordering among
variables, since rule applications combining several projection and extension operations are entirely guided by unit
propagation.

The projection and extension operations can be extended to constraints involving more than two variables to achieve high-order consistency in WCN~\cite{Cooper05}. For a Max-SAT instance, the extended projection and extension operations can be stated using 
Rule~\ref{resolution} for $k$$>$2. For the two formulas $\phi_1$ and $\phi_2$ in Rule~\ref{resolution}, replacing $\phi_1$ with $\phi_2$ is a projection and  $\phi_2$ with
$\phi_1$ is an extension. Given a unit clause $x$ and three variables $x$, $y$, $z$, the extension of the unit clause $x$
to the set of three variables can be done as follows : replacing $x$ by $x\vee y$ and $x \vee \bar{y}$, and then $x\vee y$ and $x \vee \bar{y}$ by $x\vee y \vee z$, $x\vee y \vee \bar{z}$, $x \vee \bar{y} \vee z$ and $x \vee \bar{y} \vee \bar{z}$.

Rule~\ref{nonlinear1} can be proved or applied by extending the four clauses of $\phi_1$ to ternary clauses on the three variables 
of $l_1$, $l_2$ 
and $l_3$, and then applying the projection operation to obtain~$\phi_2$.

Larrosa et al.~\citeyear{Larrosa06}, based on a logical approach, independently and in parallel with our work, defined and implemented a chain resolution rule and a cycle resolution 
rule for weighted Max-SAT. These two rules are extensions of Rules 2-RES and 3-RES presented, also independently and in parallel with our work~\cite{HL06}.

The chain resolution could be stated as follows:

$$
\left\{
\begin{array}{l}
(l_1, u_1),\\
({\bar l}_i \vee l_{i+1}, u_{i+1})_{1\leq i <k},\\
({\bar l}_k, u_{k+1})
\end{array}
\right\}
 =  
\left\{
\begin{array}{l}
(l_i, m_i - m_{i+1})_{1 \leq i \leq k}, \\
({\bar l}_i \vee l_{i+1}, u_{i+1} - m_{i+1})_{1 \leq i < k},\\
(l_i \vee {\bar l}_{i+1}, m_{i+1})_{1 \leq i <k},\\
({\bar l}_k, u_{k+1} - m_{k+1}),\\
(\Box, m_{k+1}) 
\end{array}
\right\}
$$

\noindent where, for 1$\leq$$i$$\leq$$k$+1, $u_i$ is the weight of the corresponding clause, $m_i$=min($u_1, u_2, \ldots, u_i$), and all variables in the literals are different. The weight of a mandatory clause is denoted by~$\top$, and the subtraction $-$ is extended so that $\top-u_i$=$\top$. The chain resolution rule is equivalent to Rule~\ref{linear3} if it is applied to unweighted Max-SAT. The main difference between the chain resolution rule and the weighted version of Rule \ref{linear3} presented
in Section \ref{weightedRule} is that the chain resolution shifts a part of the weight from unit clause $(l_1, m_1 - m_{k+1})$, that is derived in the weighted version of Rule \ref{linear3}, to create unit clauses 
$(l_i, m_i - m_{i+1})_{1 < i \leq k}$, $(l_1, m_1 - m_{k+1})$ itself becoming $(l_1, m_1 - m_2)$.

The cycle resolution rule could be stated as follows:

$$
\left\{
\begin{array}{l}
({\bar l}_i \vee l_{i+1}, u_{i})_{1\leq i <k},\\
({\bar l}_1 \vee {\bar l}_k, u_k)
\end{array}
\right\}
 =  
\left\{
\begin{array}{l}
({\bar l}_1 \vee l_i, m_{i-1}-m_i)_{2\leq i \leq k},\\
({\bar l}_i \vee {l}_{i+1}, u_{i}-m_i)_{2\leq i < k},\\
({\bar l}_1 \vee l_i \vee {\bar l}_{i+1}, m_i)_{2 \leq i <k},\\
(l_1 \vee {\bar l}_i \vee l_{i+1}, m_i)_{2 \leq i <k},\\
({\bar l}_1 \vee {\bar l}_k, u_k-m_k),\\
({\bar l}_1, m_k)
\end{array}
\right\}
$$

When a subset of binary clauses have a cyclic structure, the cycle resolution rule allows to derive a unit clause. Note that the detection of the cyclic structure appears rather time-consuming if it is applied at every node of a search tree and that 2$\times$($k$-2) new ternary clauses have to be inserted. So, \citeauthor{Larrosa06} apply the cycle resolution rule in practice only for the case $k$=3, which is similar to Rule \ref{nonlinear1}, when applied to unweighted Max-SAT. The cycle resolution rule applied to unweighted Max-SAT for $k$=3 can replace Rule~\ref{nonlinear1} and Rule~\ref{nonlinear2} in MaxSatz, but with the following differences compared with Rule~\ref{nonlinear1} and Rule~\ref{nonlinear2}:

\begin{itemize}
\item the application of Rule~\ref{nonlinear1} and Rule~\ref{nonlinear2} is entirely based on inconsistent subformulas detected by unit propagation. The detection of the applicability of Rule~\ref{nonlinear1} and Rule~\ref{nonlinear2} is easy and has very low overhead, since the inconsistent subformulas are always detected in MaxSatz to compute the lower bound (with or without Rule~\ref{nonlinear1}
and Rule~\ref{nonlinear2}).
Every application of Rule~\ref{nonlinear1} or Rule~\ref{nonlinear2} allows to increment the lower bound by 1.
\item the cycle resolution rule needs an extra detection of the cyclic structure, but allows to derive a unit clause from the cyclic structure. The derived unit clause 
could then be used in a unit propagation, and possibly could allow to detect an inconsistent subformula and increase the lower bound by 1.
\end{itemize}

 It would be an interesting future research topic to implement the cycle resolution rule in MaxSat1234 (i.e., MaxSatz without Rule \ref{nonlinear1} and Rule \ref{nonlinear2}) 
to evaluate  the overhead of detecting the cyclic structure and the usefulness of the unit clauses and the ternary clauses derived using the cycle resolution rule, 
and to compare the implemented solver  with MaxSatz. It would be also interesting to compare the chain resolution rule and the cycle resolution rule with the weighted 
inference rules presented in Section~\ref{weightedRule}. 

A more general Max-SAT resolution rule, where the conclusions were not in clausal form, was defined by Larrosa and Heras~\citeyear{LH05a}. Independently, Bonet et al.~\citeyear{BLM06,BLM07}
and Heras and Larrosa~\citeyear{HL06} defined a version of the rule with the conclusions in clausal form. Bonet et al.~\citeyear{BLM06,BLM07} also proved that this rule is complete for Max-SAT. Recently, Ans\'otegui et al.~\citeyear{ABLM07a,ABLM07b} have shown that Max-SAT resolution for many-valued CNF formulas provides a logical framework for
the global and local consistency properties defined for WCN.

\section{Conclusions and Future Work} \label{conclusions}

One of the main drawbacks of state-of-the-art Max-SAT solvers is the lack of suitable inference techniques
that allow to detect as much contradictions as possible and to simplify the formula at each node of the search tree. Existing approaches
put the emphasis on computing underestimations of good quality, but the problem with underestimations is that
the same contradictions are computed once and again. Furthermore, it turns out that $UP$, one of the currently best performing underestimations consisting of detecting disjoint inconsistent subsets of clauses in a CNF formula via unit propagation, is still too conservative. To make  the computation of lowers more incremental and to improve the underestimation, we have defined a number of 
original inference rules for Max-SAT that, based on derived contradictions by unit propagation, transform a Max-SAT instance into an equivalent Max-SAT instance which is easier to solve.
The rules were carefully selected taking into account that they should be applied efficiently. Since all these rules are based on contradiction detection, they should be particularly useful for hard Max-SAT 
instances containing many contradictions.

With the aim of finding out how powerful the inference rules are in practice, we have developed a new Max-SAT solver, called
MaxSatz, which incorporates those rules, and performed an experimental investigation. The results of 
comparing MaxSatz with inference rules and MaxSatz without inference rules provide empirical evidence 
of the usefulness of these rules in making lower bound computation more incremental and in improving the 
quality of lower bounds. The results of comparing MaxSatz with a large selection of the solvers 
available at the time of submitting this paper
provide empirical evidence that MaxSatz, at least for the instances solved, is faster than the other solvers.
We observed gains of several orders of magnitude for the hardest instances. Interestingly, for the benchmarks used, the second best solver
was generally different: UP for Max-2SAT, toolbar for Max-3SAT, MaxSolver for Max-Cut, and MaxSolver and UP for graph 3-coloring. So, MaxSatz is more robust than the
rest of solvers. It is worth mentioning that MaxSatz, enhanced with a lower bound based on failed literal detection~\cite{LMP06},
was the best performing solver for unweighted Max-SAT instances in the Max-SAT Evaluation 2006.
The second and third best performing solvers were, respectively, improved versions of  toolbar and Lazy\footnote{See http://www.iiia.csic.es/\~{}maxsat06 for details. Note that the results of the Max-SAT Evaluation 2006 can be compared with the results of this paper because
they were obtained with the same cluster under the same conditions.}.

As future work we plan to study the orderings of unit clauses in unit propagation to maximize the application of inference rules,
and to define new inference rules for ternary clauses. We are extending the results of this paper to weighted Max-SAT, which is more suitable 
for modeling problems such as maximum clique, set covering and combinatorial auctions, as well as constraint satisfaction problems
such as hard instances of Model RB~\cite{XBHL05,XL06}. We are also adapting the results of this paper to the partial Max-SAT solvers developed by
Argelich and Many\`a~\citeyear{AM05,AM06a,AM07a}.

\section*{Acknowledgments}
Research  partially supported by projects TIN2004-07933-C03-03 and TIN2006-15662-C02-02 funded by the {\em Ministerio de Educaci\'on y Ciencia}. 
The first author was partially supported by National 973 Program of China under Grant No. 2005CB321900.
The second author was supported by a grant {\em Ram\'on y Cajal}. Finally, we would like to thank the referees for their detailed comments and suggestions.

\bibliography{newmy-bib,newothers}

\end{document}